\documentclass[anonymous]{article}

\usepackage[preprint]{neurips_2023}

\usepackage[utf8]{inputenc} % allow utf-8 input
\usepackage[T1]{fontenc}    % use 8-bit T1 fonts
\usepackage[hidelinks]{hyperref}       % hyperlinks
\usepackage{url}            % simple URL typesetting
\usepackage{booktabs}       % professional-quality tables
\usepackage{amsfonts}       % blackboard math symbols
\usepackage{nicefrac}       % compact symbols for 1/2, etc.
\usepackage{microtype}      % microtypography
\usepackage{xcolor}         % colors

\usepackage{amsmath}
\usepackage{cleveref}
\usepackage{amssymb}
\usepackage{mathtools}
\usepackage{amsthm}
\usepackage{subfig}
\usepackage{graphicx}
\usepackage{bm}
\usepackage{slashed}
\usepackage{algorithm}
\usepackage{algorithmic}
\usepackage{multirow}
\usepackage{diagbox}
\usepackage{soul}
\usepackage[toc,page]{appendix}

\usepackage{enumitem}

\theoremstyle{plain}
\newtheorem{theorem}{Theorem}[section]
\newtheorem{proposition}[theorem]{Proposition}

\theoremstyle{definition}

\theoremstyle{remark}

\newcommand*{\ourtech}{SGN}

\newcommand*{\oursubteca}{NeurSpec}

\newcommand*{\oursubtecb}{NormSpec}

\newcommand{\ignore}[1]{}
\newcommand{\revise}[1]{{\color{black}#1}}

\title{SGN: A Similarity-based Generative Network for \\Data Generation under Distribution Shift}

\author{%
  Jiaqi Zhu\\
  National University of Singapore\\
  \texttt{jiaqi77@nus.edu.sg}\\
  \And
  Xincheng Chen\\
  National University of Singapore\\
  \texttt{e0838447@u.nus.edu.sg}\\
  \And
  Yuncheng Wu\\
  Renmin University of China\\
  \texttt{wuyuncheng@ruc.edu.cn}\\
  \And
  Zhaojing Luo\\
  Beijing Institute of Technology\\
  \texttt{zjluo@bit.edu.cn}\\
  \And
  Beng Chin Ooi\\
  Zhejiang University\\
  \texttt{ooibc@zju.edu.cn}\\
}

\begin{document}

\maketitle

\vspace{-2mm}
\begin{abstract}
Generative models trained on a source domain often produce samples that are poorly aligned with shifted target domains, limiting their effectiveness for target-domain data augmentation. Although target-specific adaptation can reduce this mismatch, it typically requires additional optimization and domain-specific parameters. We propose a Similarity-based Generative Network (SGN), a reusable framework that is trained once on labeled source data and applied to new target domains without parameter updates. SGN learns a latent space structured by label-induced pairwise similarities while preserving reconstructive information through an encoder-decoder architecture. At generation time, a small labeled representative set from the target domain is encoded and combined in the learned latent space, allowing the generated samples to inherit target-specific characteristics while maintaining class consistency. We further analyze the realizability and dimensionality requirements of the proposed similarity structure.
Experiments on image and tabular datasets demonstrate the effectiveness of SGN for target-guided data augmentation under source-to-target distribution shifts.
\end{abstract}

\ignore{
The core of training a generative network is to develop an evaluation mechanism of generated samples. 
Existing works adopt distribution-based evaluation mechanisms, which restrict the flexibility of the generated data distribution. In this paper, we propose a novel similarity-based evaluation mechanism, which utilizes data labels to construct similarity measurements. Based on this, we design a new generative 
\textcolor{black}{network}
%mechanism 
\ourtech{} 
%that supports 
to support
flexible distribution setups, 
%and thus, is able
and hence
%to
address the distribution shift challenge
%problem compared to
faced by
existing generative paradigms. 
We validate the effectiveness of \ourtech~through 
%an extensive experimental study over 
\revise{
extensive experiments on both image and tabular datasets.}
The results show that \ourtech~is capable of generating data in various distributions, enabling it to outperform %\revise{state-of-the-art baselines}
\textcolor{black}{existing conditional generative models (CGMs)} 
\textcolor{black}{in terms of either downstream classification tasks or generated data quality}.}

\vspace{-2mm}
\section{Introduction}
\vspace{-2mm}
Generative models (GMs) provide a principled way to expand limited datasets by synthesizing new samples that preserve the structure of observed data.
Representative paradigms, such as generative adversarial networks (GANs)~\citep{NIPS2014_5ca3e9b1}, variational autoencoders (VAEs)~\citep{DBLP:journals/corr/KingmaW13}, and more recently, diffusion-based models~\citep{DBLP:conf/nips/HoJA20,DBLP:journals/csur/YangZSHXZZCY24}, have attracted increasing attention for \revise{data generation} in various applications~\citep{deldjoo2021survey,zhu2026generative,yi2017dualgan,shaham2019singan}.
Despite their architectural differences, these methods are typically trained to approximate the data distribution observed during training. Conditional generative models further incorporate class labels, semantic attributes, or domain information to control the characteristics of generated samples~\citep{mirza2014conditional,miyato2018cgans,DBLP:conf/iccv/ChouBH23, zhu2025context}. When the training and deployment data follow similar distributions, such models can generate realistic and useful samples for data augmentation and downstream prediction.

However, in many practical settings, a generative model trained on one source domain is reused across multiple target domains with different data distributions. Such shifts may arise from changes in acquisition devices, environments, populations, data-collection protocols, or class proportions~\citep{quinonero2008dataset, zhu2023meter}.
As a result, samples generated from the source distribution may be poorly aligned with the target domain and provide limited, or even negative, utility for target-domain augmentation. Although target-specific fine-tuning can mitigate this mismatch, it is often costly and unreliable when only limited target data are available.

We therefore consider a reusable generation setting in which a model is trained once on a labeled source dataset and applied to multiple target domains without parameter updates, using only a small labeled representative set at generation time.
The key challenge is to transfer class semantics learned from the source domain while preserving target-specific characteristics.
% Conventional class-conditional models primarily learn source-domain conditional distributions through fixed label representations, whereas direct interpolation among target samples captures target characteristics but lacks an explicit semantic structure to constrain the generated data.
Existing approaches only partially address this challenge.
\textit{Source-trained conditional generators} use class labels or semantic attributes to control generation, but the learned class-conditional distributions remain largely tied to the source domain~\citep{miyato2018cgans,DBLP:conf/icml/HouCSPLC22,DBLP:conf/iccv/PeeblesX23,DBLP:conf/iclr/SadatKHW25}.
\textit{Target-specific adaptation methods} can align a pretrained generator with a new domain through fine-tuning or additional optimization, but require maintaining or updating a separate model for each target domain~\citep{DBLP:conf/cvpr/WangGBHK020,DBLP:journals/corr/abs-2511-18281}.
\textit{Sample-based augmentation methods}, such as input-space mixing and latent interpolation, can directly incorporate target-domain characteristics from limited observations, yet they typically lack an explicitly learned semantic structure to preserve class consistency during generation~\citep{DBLP:conf/iclr/ZhangCDL18,DBLP:conf/icml/VermaLBNMLB19,nguyen2026targeted}.
% source-trained conditional generators remain tied to source-domain distributions, target-specific adaptation requires additional optimization for each new domain, and sample-based interpolation captures target variation without an explicit semantic constraint.
These limitations motivate a reusable framework that combines transferable class structure with target-domain information during generation.

In this paper, we propose a novel evaluation mechanism that 
%creates 
%leads to
forms a basis for
an alternative paradigm for \textcolor{black}{GMs}. Specifically, we consider a generated sample to satisfy the evaluation criteria if its similarity scores with other data samples are close to the similarity scores of original data samples;
%%% ooibc: what are "those" original data samples
%%% those is ambiguous!
%%% xcc: Thanks for pointing out prof. "those of" can be changed to the "similarity scores of"
otherwise, the quality of the generated data is poor.
%undesirable. 
%%% ooibc:undersirable?
%%%  
%%% xcc: Thanks for pointing out, prof. I am not sure. Maybe poor is better?
The rationale behind this mechanism is that the original data points definitely satisfy the evaluation criteria; therefore, if a generated data point results in the same performance as the original data points, it can be considered a satisfactory sample. The key to this mechanism is to develop an appropriate similarity measurement, which is challenging. If the measurement is tight, the generated samples will only be accepted if they are extremely close to the original data points. On the contrary, if the measurement is loose, the generated samples may not be acceptable even if they satisfy the evaluation criteria. To address this challenge, we utilize data labels to construct the similarity measurement because data labels provide a natural balance between the generalization and the specification of measurement, where sufficient numbers of data are categorized into each label. We therefore design a zero-one similarity matrix: the similarity score of two samples is one if they have the same label, and zero otherwise. 
%%%jq: too many 'Specifically'

Based on the evaluation mechanism, we propose a new generative network, called similarity-based generative network (\ourtech{}). Specifically, our evaluation mechanism establishes a latent space where each data sample has a set of colinear latent representations and a unique special latent representation. 
%we assume that each data point has multiple co-linear latent representations. Notably, there exists a special latent representation for each data point, such that 
The similarity score between any two data points is reflected by the dot product of their special 
%%% ooibc:special?
%%% xcc: Yes, prof. Only a part of latent representations need to satisfy the zero-one similarity matrix.
%% who knows what is special?
%%% xcc: Thanks for pointing out, prof. I will revise it.
latent representations. 
Consequently, \revise{we train a neural network model}
%neural networks are trained 
to find the latent space and also the transformation between the latent and the original spaces. 
%%%jq: what does the 'neural network' here mean? SGN?
Essentially, \ourtech{} learns the special latent representations of the data points in the training dataset without being aware of its distribution. After \ourtech{} is trained, we utilize these latent representations to generate desired samples according to the need of downstream tasks. %\textcolor{blue}{/new parties}. 
%%% ooibc: below
Compared to existing \textcolor{black}{GM}s, \ourtech{} is 
more amenable to
%% ooibc: more amenable to
generating samples that follow various distributions and is capable of addressing the distribution shift problem, where the data distribution in the source domain is different from that in the target domain.
% \citep{quinonero2008dataset}.
%%% ooibc: this sentence is so hard to parse!
%%%jq: i.e.,... or (i.e., ...) need to be unified

Mathematically, \ourtech{} can be viewed as a conditional generative paradigm. However, unlike existing conditional generative works that utilize the label information explicitly and represent each label with an embedding, \ourtech{} implicitly adopts label information, which could better alleviate the distribution shift problem because it provides a stable label-data relationship. In particular, in our design, the inner product of any two data representations always satisfies the same zero-one similarity requirement across different domains. 
%While 
In existing works, a data sample
%one data
%%% ooibc: one data?
%%% how to measure?
is constantly represented as a one-hot label representation, while the data representations vary across different domains.%, the label representation remains the same (i.e., the one-hot vector); 
%%% ooibc: i.e
Thus, the relationship between the label and data representations exhibits significant variations across different domains.

Our contributions are summarized as follows:
\begin{itemize}[itemsep=0mm,leftmargin=4mm]
\vspace{-3mm}
    \item We propose a novel evaluation mechanism based on similarity measurements between different data points. To the best of our knowledge, this is the first work that designs a distribution-agnostic evaluation mechanism for \textcolor{black}{GM}s.
    \item We present a similarity-based generative 
    %scheme
    network
    \ourtech{} to generate desirable samples for %downstream tasks
    \textcolor{black}{target domains}, mitigating the distribution shift problem. We further analyze the lower bound of the latent space dimension and provide the convergence analysis of \ourtech{} training. 
    \item We conduct extensive experiments on \revise{eleven} real-world datasets including both \textcolor{black}{image and tabular} datasets, demonstrating the effectiveness of \ourtech{} for generating high-quality data and mitigating the distribution shift problem, \revise{compared to six state-of-the-art baselines}.
\end{itemize}

%%%jq: need to unify generative models and GMs

% \vspace{-3mm}
\section{Related Work}

%\subsection{Conditional Generative Model}
\textbf{Conditional Generative Model.}
Conditional generative models synthesize data according to auxiliary conditions, such as class labels, semantic attributes, text descriptions, or domain information.
Early representative approaches include conditional generative adversarial networks (CGANs)~\citep{mirza2014conditional}, conditional variational autoencoders (CVAEs)~\citep{sohn2015learning}, and projection-based conditional GANs~\citep{miyato2018cgans}. Subsequent studies improved class-conditional generation through contrastive objectives, auxiliary classifiers, and more expressive conditioning mechanisms~\citep{kang2020contragan,kang2021rebooting}.

More recently, diffusion models have become a prevalent paradigm for conditional generation. Classifier and classifier-free guidance enable pretrained diffusion models to generate samples aligned with class labels, textual descriptions, or other conditions~\citep{DBLP:conf/nips/DhariwalN21, DBLP:journals/corr/abs-2207-12598}. Recent studies further investigate the quality-diversity trade-off of guidance and develop more flexible conditioning mechanisms~\citep{DBLP:conf/iclr/SadatKHW25,DBLP:conf/iclr/LiHCLS0025}. Nevertheless, the conditional distributions learned by these models remain primarily shaped by their training data. When a generator trained on a source domain is directly applied to a shifted target domain, specifying the desired class does not by itself capture how the class-conditional data characteristics change across domains.

\textbf{Target-specific Generative Adaptation.}
Another line of research adapts a pretrained generator to a target domain using a limited number of target examples. Earlier GAN-based methods transfer source-domain generators through selective fine-tuning, regularization, or cross-domain correspondence preservation~\cite{DBLP:conf/cvpr/WangGBHK020,DBLP:journals/corr/abs-2511-18281}. Similar ideas have been extended to diffusion models, where a pretrained model is adapted using target-domain fine-tuning, domain-specific regularization, parameter-efficient updates, or additional guidance objectives~\cite{DBLP:journals/corr/abs-2211-03264}.
For example, Domain Guidance~\citep{DBLP:conf/iclr/ZhongZ0L25} interprets diffusion-model transfer as conditional generation and introduces a target-domain guidance mechanism, whereas SaRA~\citep{DBLP:conf/iclr/HuZYHWM25} performs parameter-efficient adaptation through sparse low-rank updates. These approaches can produce high-quality samples in a new domain, but still involve target-specific optimization and typically require storing either a separately adapted model or domain-specific parameters. In contrast, SGN keeps the source-trained model fixed and incorporates target-domain characteristics through representative samples only during generation.

\textbf{Sample-based Data Augmentation.}
Sample-based augmentation constructs new training examples directly from observed samples or their representations.
Classical methods include Mixup~\citep{DBLP:conf/iclr/ZhangCDL18}, which linearly combines training examples and labels, and Manifold Mixup~\citep{DBLP:conf/icml/VermaLBNMLB19}, which performs interpolation in hidden representation spaces.
% Autoencoder-based methods similarly generate samples by interpolating latent representations, providing a simple and efficient alternative to fitting a complete target-domain generative model.
Recent diffusion-based augmentation methods leverage strong pretrained generative priors to improve the fidelity and diversity of synthetic samples.
\cite{DBLP:conf/iclr/TrabuccoDGS24} edits real images using pretrained text-to-image models, while \cite{DBLP:conf/cvpr/Wang025a} performs class-wise interpolation between diffusion inversions and reconstructs the interpolated representations through a two-stage denoising process.
More recent work studies which synthetic samples are most useful rather than generating them indiscriminately. In particular, targeted image augmentation selectively allocates diffusion-based augmentation to training examples for which synthetic data are expected to provide greater utility~\citep{nguyen2026targeted}.

These approaches demonstrate the effectiveness of sample-conditioned and representation-space augmentation. However, conventional interpolation rules are generally defined by geometric proximity in the input or latent space.
Although modern pretrained representations may already encode semantic information, they do not explicitly impose a transferable pairwise class geometry learned from the source domain. SGN instead organizes the latent space using label-induced similarity constraints and uses this structure to guide the combination of limited target-domain representations without updating the model parameters.

\ignore{
\revise{\textbf{Single Domain Generalization.} 
Single domain generalization (SDG) is to train a model on one domain and make the model generalize well on other unseen domains \citep{DBLP:conf/ijcai/0001LLOQ21, DBLP:conf/iccv/WangLQHB21}. 
The main methodology in SDG is Adversarial Data Augmentation (ADA). 
For example, \citep{DBLP:conf/cvpr/QiaoZP20} adopts a Wasserstein autoencoder to help ADA generalize better; \citep{DBLP:conf/cvpr/FanWKYGZ21} develops an adaptive normalization mechanism to improve the learned model generalization performance.
%\textcolor{blue}{Domain generalization (DG) is to train a model on one or several related but different domains and make the model generalize well on other unseen domains \citep{DBLP:conf/ijcai/0001LLOQ21}. The most similar setting to our problem is single domain generalization (SDG), where only one domain is available for training and the model should perform well on other unseen target domains \citep{DBLP:conf/iccv/WangLQHB21}. The main methodology in SDG is Adversarial Data Agumentation (ADA). For example, \citep{DBLP:conf/cvpr/QiaoZP20} adopts a Wasserstein autoencoder to help ADA generalize better; \citep{DBLP:conf/cvpr/FanWKYGZ21} develops an adaptive normalization mechanism to improve the learned model generalization performance.
%However, existing works on SDG only focus on classifier design for CV applications, such as medical image and face image \citep{DBLP:conf/aaai/Liu0DH22, DBLP:conf/cvpr/JiaZSC20}.
} 
%\textcolor{blue}{In comparison, we naturally extend the research to generator design and test the generator on both image and tabular datasets.}  
\revise{However, existing works on SDG primarily focus on training classifiers that can achieve robust performance across different target domains.   
In contrast to our work, these works do not specifically address the task of generating samples in different target domains, i.e.,
%which is in contrast to our work. 
SDG and our proposed approach differ in terms of the settings and goals they encompass.}
}

%\vspace{-3mm}
\section{Similarity-Based Generative Networks}
\label{sec:sgn}

In this section, we present our similarity-based generative networks \ourtech{} that can generate desirable samples for \textcolor{black}{the need of target domains}. 
Generally speaking, \ourtech{} isolates the \textcolor{black}{source domain training} and \textcolor{black}{target domain generation}, where \ourtech~is first trained in the source domain without knowing the information of 
the target domain, and then \ourtech~generates required data samples of the target domain. 
Specifically, we first propose the similarity-based evaluation mechanism and then analyze the latent space it determines in Section~\ref{subsec:eval-mechanism}. 
Subsequently, we describe the training paradigm of \ourtech{} in Section~\ref{subsec:sgn-train}, and present the convergence analysis of the training process in Section~\ref{subsec:analysis}. 
Finally, we describe how to generate desirable samples in the target domain in Section~\ref{subsec:sgn-generation}.

We consider a labeled dataset $\mathbb{X} = \{(\bm{x}_i,y_{i}) | i \in \{1, \cdots, N\}\}$, where $N$ is the number of samples in $\mathbb{X}$. 
Let $K$ be the number of distinct labels and $\bm{a} = \{\bm{a}_1, \cdots, \bm{a}_K\}$ be the label value set. 
Moreover, we denote \textcolor{black}{the $i^{\text{th}}$ original data and its latent representations} as $\bm{x}_i^{\text{ori}}$ and $\bm{x}_i^{\text{lat}}$, respectively. 

%\vspace{-2mm}
\subsection{Similarity-Based Evaluation Mechanism}
\label{subsec:eval-mechanism}

We utilize the similarity score between samples to construct the evaluation mechanism, where the similarity scores are determined by data labels.
Specifically, we require the similarity score of two samples to be one if they have the same label, and otherwise zero. 
Let $\bm{S}$ denote the zero-one similarity matrix defined by the evaluation mechanism. 

In essence, this similarity-based evaluation mechanism determines a latent linear space. 
In this latent space, we can utilize the inner product of two data points' representations as their similarity score, where each data point has a special latent representation and 
%multiple\textcolor{blue}{/single} 
%%% zj: for the "special latent representation", it appears over 30 times in this section, can you give it a name instead of "special latent representation"?
%%% xcc: Thanks for the comment. I am not sure whether we can use the abbreviation of special latent representation, i.e., SLT.
%%% jq:  'each data point has a special latent representation and a set of other latent representations'  why? need further explanation
%%% xcc: A similar problem has been asked by zj. Please refer the answer to the problem: 'what is the advantage of "only the special latent representation needs to satisfy the zero-one similarity requirement"?'
%%% jq: Does the latent representation here refer to the output of the encoder?
%%% xcc: Yes
\textcolor{black}{a set of}
other latent representations. 
The inner product of the special latent representations is equivalent to the similarity score, and all latent representations for one data sample are co-linear. 
%%% (done) zj: inner product or cosine similarity? --> the latter is normalized
%%%xcc: inner product; cosine similarity can be considered a special case.
Formally, given a latent space with $D$ dimensions, $\bm{S}$ is encoded through
\begin{equation}
\label{equation15}
\bm{S}_{i,j} = \langle \bm{x}_i^{\text{lat\_S}}, \bm{x}_j^{\text{lat\_S}} \rangle =
\begin{cases}
1 & y_{i} = y_{j}\\
0 & y_{i} \neq y_{j},
\end{cases}
\end{equation}
where $\bm{x}_i^{\text{lat\_S}}$ and $\bm{x}_j^{\text{lat\_S}}$ are the special latent representations of data points $i$ and $j$, respectively, and $\langle \bm{x}_i^{\text{lat\_S}}, \bm{x}_j^{\text{lat\_S}} \rangle$ is the inner product of $\bm{x}_i^{\text{lat\_S}}$ and $\bm{x}_j^{\text{lat\_S}}$.

\begin{figure}[t]
	\centering
     \includegraphics[scale=0.4]{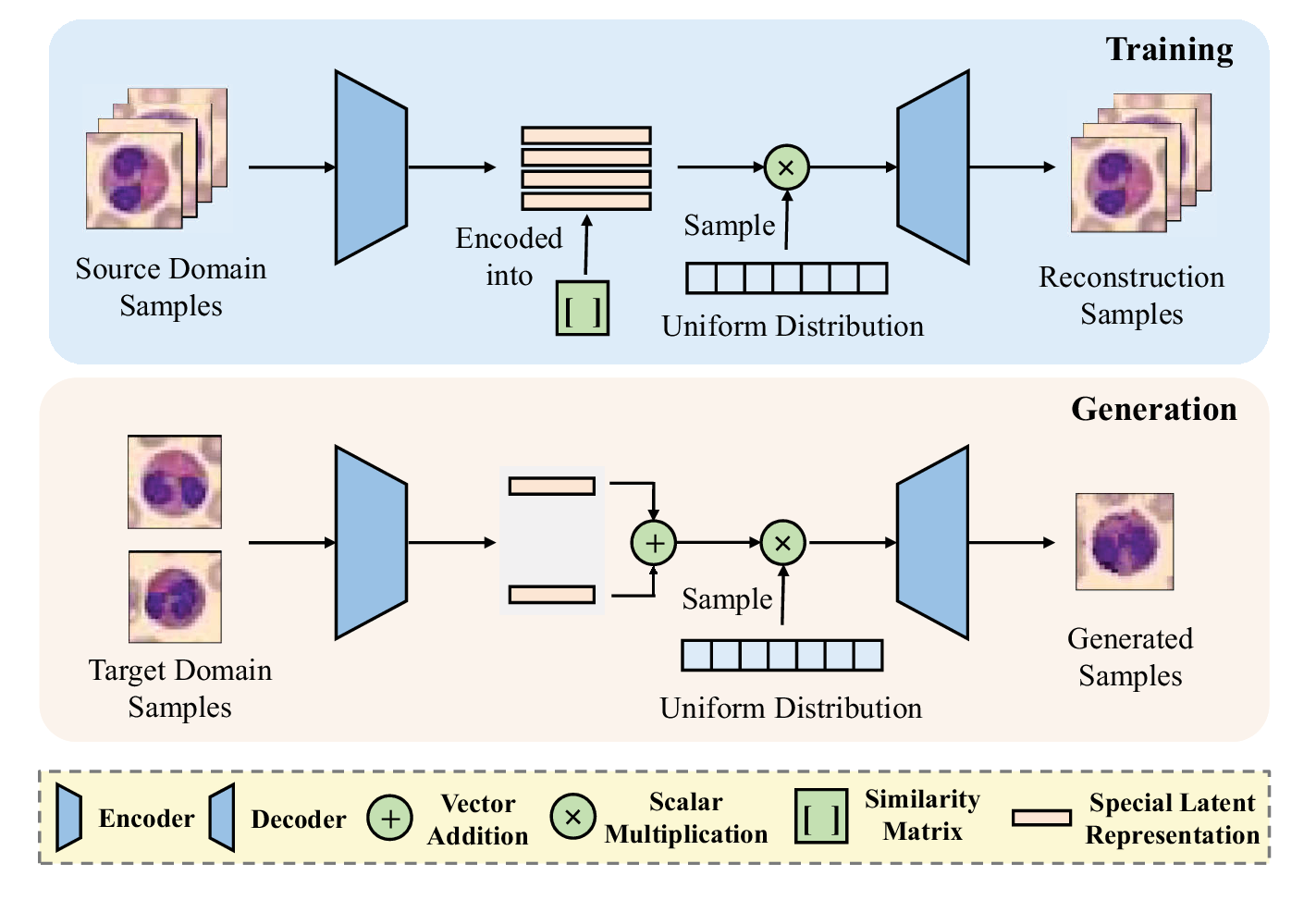}
     \vspace{-2mm}
	\caption{An illustration of the \ourtech~paradigm, which consists of two stages. 
    %\ourtech~is built on the basis of the similarity-based evaluation mechanism. 
    In the training stage, the encoder and decoder learn the transformation between the original data space and latent space, where the latent space satisfies the similarity requirement defined in Equation~\ref{equation15}. In the data generation stage, the encoder takes two data points with the same label and produces two latent representations; then, the decoder generates a data point by a linear combination of these latent representations as inputs.
    %a generated data point 
    %that satisfies the mechanism requirement 
    %is decoded from
    %a linear combination of the latent representations of data points with the same label.
    }
    \vspace{-3mm}
    \label{f1}
\end{figure}
%%%jq: please change the dataset's figures

The above design has \revise{two} properties. 
First, it ensures that \textcolor{black}{the latent representations of different labels} are orthogonal, i.e., \textcolor{black}{latent representations of all labels} essentially form an orthogonal set of the latent space, and each sample's latent representation is a linear combination of its label's corresponding orthogonal basis. 
%%% zj: "each sample's latent representation is a linear combination of its label's corresponding orthogonal basis" --> this needs more explanation
%%%xcc: I am not sure whether I understand the problem properly. The reason of "each sample's latent representation is a linear combination of its label's corresponding orthogonal basis" is "all labels' latent representations essentially form an orthogonal set of the latent space".
%%% zj: do you mean the basis of different labels are orthogonal, and each sample's latent representation is a linear combination of its label's "corresponding basis" --> the basis of the same label is not orthogonal
%%% xcc: Yes. 
Let $\bm{V} = \{\bm{V}_d | d \in \{1,...,D\}\}$ denote the aforementioned orthogonal set of the latent space, where different labels have different $\bm{V}_d$ and each label may have multiple orthogonal basis. Let $m_k$ be the number of orthogonal basis for the $k^{\text{th}}$ label ${\bm{a}}_k$, then $D = \sum_{k=1}^{K} m_k$.
%\begin{equation}
%\label{equation1}
%D = \sum_{k=1}^{K} m_k.
%\end{equation}

We use an indicator $\gamma_{d,k}$ to represent the relationship between $\bm{V}_d$ and ${\bm{a}}_k$: if $\bm{V}_d$ is an orthogonal basis of ${\bm{a}}_k$, then $\gamma_{d,k} = 1$; otherwise $\gamma_{d,k} = 0$. As a consequence, $\bm{x}_{i}^{\text{lat}}$ could be represented by $\bm{V}$ as
\begin{equation}
\label{equation3}
\bm{x}_{i}^{\text{lat}} = \sum_{d=1}^{D}\sum_{k=1}^{K}\beta_{i,d}\alpha_{i,k}\gamma_{d,k}\bm{V}_d, 
\end{equation}
where $\beta_{i,d}$ is the linear coefficient, and $\alpha_{i,k}$ indicates whether $y_i$ equals to the label value ${\bm{a}}_k$, i.e., if ${\bm{a}}_k = y_i$, $\alpha_{i,k} = 1$; otherwise, $\alpha_{i,k} = 0$. 

Second, the co-linear setting supports a flexible generative paradigm design, enabling the special latent representation to be set in a variety of ways. 
For example, we could use the normalized representation as the special latent representation due to the co-linear setting. 
Another option is to adopt a neural network to infer the special latent representation, where the network's input is the original representation and the output can be set to the special latent representation. 
%%% zj: can you give the "special latent representation" a name?

\ignore{
Third, the above setting also allows one data point to have 
\textcolor{blue}{a set of} %multiple 
latent representations \textcolor{blue}{which are all colinear},
%(\textcolor{blue}{one-multiple}) 
and only the special latent representation needs to satisfy the zero-one similarity requirement. 
}

\subsection{\ourtech{} Training}
\label{subsec:sgn-train}

Notably, \ourtech~refers to a family of deep neural networks trained based on the similarity-based evaluation mechanism. 
It should satisfy two requirements: 
(1) find the latent space determined by the similarity-based evaluation mechanism, i.e., the inner product of any two data points' special latent representations is equivalent to the similarity of these two data points; 
(2) imitate the transformation between the latent space and the original space, i.e., given a latent representation of a data point, \ourtech~could obtain its original representation. Therefore, the loss function is defined as follows:
\begin{equation}
\label{equation6}
\mathcal{L} =\frac{\eta}{N^2}\sum_{i=1}^{N}\sum_{j=1}^{N}\mathcal{L}_{i,j}^{(\text{r1})} + \frac{1 - \eta}{N}\sum_{i=1}^{N}\mathcal{L}_{i}^{(\text{r2})}, 
\end{equation}
%% zj: please also justify the design of coefficients: \frac{\eta}{N^2} and \frac{1 - \eta}{N}\sum_{i=1}^{N}, these two coefficients does not sum to one
%%% xcc: Here N^2 and N are the number of \mathcal{L}_{i,j}^{(\text{r1})} and \mathcal{L}_{i}^{(\text{r2})}, respectively. If we consider \frac{1}{N^2}\sum_{i=1}^{N}\sum_{j=1}^{N}\mathcal{L}_{i,j}^{(\text{r1})} as one loss and recognize \frac{1}{N}\sum_{i=1}^{N}\mathcal{L}_{i}^{(\text{r2})} as another loss. Then, \eta and 1 - \eta are their coefficients, and the sum of coefficients is 1.
where $\mathcal{L}_{i,j}^{(\text{r1})}$ corresponds to the regularization loss, which is the first requirement for any two data points, and $\mathcal{L}_{i}^{(\text{r2})}$ corresponds to the reconstruction loss, which is the second requirement for any specific training data point. $\eta \in [0,1]$ is a \textcolor{black}{trade-off parameter} which represents the relative importance between two requirements. 
\textcolor{black}{
%Note that there are multiple ways to define $\mathcal{L}_{i,j}^{(\text{r1})}$ and $\mathcal{L}_{i}^{(\text{r2})}$, i.e., \ourtech~ is general and can represent a family of realizations. In the following, we introduce two designs. 
Figure~\ref{f1} illustrates the general pipeline of \ourtech, which follows the autoencoder structure: $\bm{x}_{i}^{\text{ori}}$ and $\bm{x}_i^{\text{lat}}$ are the input and output of the encoder, respectively.
%In one-multiple setting, 
The input of the decoder is a scalar multiplication of $\bm{x}_i^{\text{lat}}$ that supports co-linear relationships among a data point's latent representations, i.e., $c\frac{\bm{x}_i^{\text{lat}}}{\|\bm{x}_i^{\text{lat}}\|}$, 
%%% (done) zj: do you mean unit vector?
%%% xcc: Yes, it is related to unit vector. The input of the decoder is a random variable multiplying a unit vector.
where $c$ is sampled from the uniform distribution $U(\|\bm{x}_i^{\text{lat}}\| - C,\|\bm{x}_i^{\text{lat}}\| + C)$. 
The constant $C$ is the maximum length variation of the latent representation.
%%% (done) zj: how is the constant $C$ decided? calculated or by experiment? 
%%% xcc: $C$ is decided by the experiment.
%In one-one setting, the input of the decoder is just $\bm{x}_i^{\text{lat\_S}}$; 
Let $\bm{x}_{i}^{\text{rec}}$ denote the output of the decoder.}
\revise{Then the reconstruction loss $\mathcal{L}_{i}^{(\text{r2})}$ can be defined as
$\mathcal{L}_{i}^{(\text{r2})} = \|\bm{x}_{i}^{\text{rec}} - \bm{x}_{i}^{\text{ori}}\|$.}
%In fact, the decoder's output is expected to be the same as the encoder's input $\bm{x}_{i}^{\text{ori}}$ (as assumed in the autoencoder framework), but there may exist losses. Hence, we use different notations to denote the input and the output of the autoencoder and use $\mathcal{L}_{i}^{(\text{r2})} = \|\bm{x}_{i}^{\text{rec}} - \bm{x}_{i}^{\text{ori}}\|$ to measure the differences between the original and the reconstructed representations.
%
\revise{
For $\mathcal{L}_{i,j}^{(\text{r1})}$, we define the special latent representation as follows:
\begin{equation}
\label{equation70}
\mathcal{L}_{i,j}^{(\text{r1})} =
\begin{cases}
\left|\langle f(\bm{x}_i^{\text{lat}}), f(\bm{x}_j^{\text{lat}}) \rangle\right| & y_{i} \neq y_{j}\\
\left|\langle f(\bm{x}_i^{\text{lat}}), f(\bm{x}_j^{\text{lat}}) \rangle - 1\right|  & y_{i} = y_{j},
\end{cases}
\end{equation}
where $f(\cdot)$ is a post-processing function of the encoder's output and $\bm{x}_i^{\text{lat\_S}} = f(\bm{x}_i^{\text{lat}})$. For example, it can be the identity function $f(\bm{x})$ or the normalized function $f(\bm{x}) = \frac{\bm{x}}{\|\bm{x}\|}$. We shall discuss the choice of this function for different types of datasets in Section~\ref{sec:experiment}.
}
%%%jq: It needs to be written more logically, first clarifying what the input and output of the encoder are, and then explaining the input and output of the decoder. Finally, introduce how the loss is calculated. Specifically, better to introduce Lr1 first and then Lr2.
%%%xcc: Yes, this is our general order. Lr1 is more complicated than Lr2. Therefore, we introduce Lr2 first.
\ignore{
For $\mathcal{L}_{i,j}^{(\text{r1})}$, we can recognize the output of the encoder as special latent representation, which is
%In the following, we introduce two realizations \textcolor{blue}{on image datasets}, \oursubteca~and \oursubtecb, as examples. In \oursubteca, we utilize a neural network to decide the special latent representations, and the whole implementation follows the autoencoder structure, as shown in Figure \ref{f1}. To be more specific, $\bm{x}_{i}^{\text{ori}}$ and $\bm{x}_i^{\text{lat\_S}}$ are the input and output of the encoder, respectively. The input of the decoder is a scalar multiplication of $\bm{x}_i^{\text{lat\_S}}$ to support co-linear relationships among a data point's latent representations, i.e., $c\frac{\bm{x}_i^{\text{lat\_S}}}{\|\bm{x}_i^{\text{lat\_S}}\|}$, where $c$ is a sample from the uniform distribution $U(\|\bm{x}_i^{\text{lat\_S}}\| - C,\|\bm{x}_i^{\text{lat\_S}}\| + C)$. The constant $C$ is the maximum length of the latent representation. The output of the decoder is $\bm{x}_{i}^{\text{rec}}$. In fact, the decoder's output is expected to be the same as the encoder's input $\bm{x}_{i}^{\text{ori}}$ (as assumed in an autoencoder framework), but there may exist losses. Hence, we adopt different notations to denote the input and the output of the autoencoder and use $\mathcal{L}_{i}^{(\text{r2})} = \|\bm{x}_{i}^{\text{rec}} - \bm{x}_{i}^{\text{ori}}\|$ to measure the differences between the original and the reconstructed representations. For the first requirement, we propose 
\begin{equation}
\label{equation7}
\mathcal{L}_{i,j}^{(\text{r1})} =
\begin{cases}
\left|\langle \bm{x}_i^{\text{lat\_S}}, \bm{x}_j^{\text{lat\_S}} \rangle\right| & y_{i} \neq y_{j}\\
\left|\langle \bm{x}_i^{\text{lat\_S}}, \bm{x}_j^{\text{lat\_S}} \rangle - 1\right|  & y_{i} = y_{j}.
\end{cases}
\end{equation}
We could also consider the normalization of the encoder output as the special latent representation, which is,
%Another realization is \oursubtecb, which is similar to \oursubteca~(i.e., using the same architecture in Figure \ref{f1}). The only difference is that \oursubtecb~utilizes the normalized representation as the special latent representation. Therefore, \oursubtecb~uses the same $\mathcal{L}_{i}^{(\text{r2})}$ as \oursubteca~, while its $\mathcal{L}_{i,j}^{(\text{r1})}$ is:
\begin{equation}
\label{equation70}
\mathcal{L}_{i,j}^{(\text{r1})} =
\begin{cases}
\left|\langle f_n(\bm{x}_i^{\text{lat}}), f_n(\bm{x}_j^{\text{lat}}) \rangle\right| & y_{i} \neq y_{j}\\
\left|\langle f_n(\bm{x}_i^{\text{lat}}), f_n(\bm{x}_j^{\text{lat}}) \rangle - 1\right|  & y_{i} = y_{j},
\end{cases}
\end{equation}
where $f_n(\bm{x})$ $ = \frac{\bm{x}}{\|\bm{x}\|}$ is the normalized function, $\|\bm{x}\|$ is the $l_2$ norm of $\bm{x}$, and $\bm{x}_i^{\text{lat\_S}} = f_n(\bm{x}_i^{\text{lat}})$. 
%% wyc: the only difference between Eqn(5) and Eqn(6) is whether the encoder output is normalized, would it be possible to use a function f() to represent the general transformation? 
}

%In essence, \oursubtecb~employs one special latent representation (i.e., the normalized representation) for one label, where different data from the same label have latent representations with different lengths but the same direction.
%%A noteworthy aspect is that NormSpec allows one data point to have multiple latent representations, and each latent representation corresponds to a length. Thus, each data point's feasible latent representation essentially refers to an interval. However, \oursubteca~allows one label to embrace multiple special latent representations as long as they are feasible solutions to Equation~\ref{equation7}, and thus \oursubteca~is a more sophisticated realization. We will experimentally compare the two realizations in Section~\ref{sec:experiment}. 
%\textcolor{blue}{Moreover, on tabular datasets, we employ a similar implementation as \oursubtecb~where the loss is the same as $\mathcal{L}_{i,j}^{(\text{r1})}$ and $\mathcal{L}_{i}^{(\text{r2})}$ of \oursubtecb~, but one data only has one latent representation, i.e., the input of the decoder is $\bm{x}_i^{\text{lat}}$ instead of $c\frac{\bm{x}_i^{\text{lat\_S}}}{\|\bm{x}_i^{\text{lat\_S}}\|}$.}
%% wyc: is it possible to generalize the approach and data generation algorithm and defer the detailed settings for different types of datasets in the experiment?

\textbf{Determining the dimension of the special latent representation.} Since the number of samples $N$ is usually large, we utilize a batch of samples to calculate $\mathcal{L}$ in each epoch.
Let $D$ be the dimension of the latent space. 
The batch size can be used to derive the lower bound of $D$.
Let $\mathbb{X}_{\text{b}}$ be a batch of the training data and $B$ be the batch size. 
Suppose $\mathbb{X}_{\text{b}}$ has $Q$ different label values ($Q \le K$), and $N_{q}^{(\text{b})}$ is the number of data points with label value $\bm{a}_q$ in $\mathbb{X}_{\text{b}}$, and apparently $\sum_{q = 1}^{Q} N_{q}^{(\text{b})} = B$.
Moreover, to facilitate our analysis, we assume data in $\mathbb{X}_{\text{b}}$ has different special latent representations. \footnote{The situation we consider here is the worst case. If we have more different special latent representations in one batch, it would be more difficult to make $\sum_{i=1}^{B}\sum_{j=1}^{B}\mathcal{L}_{i,j}^{(r1)}$ = 0. Therefore, we consider the worst case for training, where all the data in this batch have different special latent representations.}
If \ourtech~is trained close to the optimal solution, according to Equation~\ref{equation15}, for any other data sample whose special latent representation is $\bm{x}_{\text{other}}^{\text{lat\_S}}$ and label is $y_{\text{other}}$, it should satisfy the following conditions established by the data in $\mathbb{X}_{\text{b}}$,
\begin{equation}
\label{equation12}
\begin{cases}
\langle \bm{x}_{\text{other}}^{\text{lat\_S}}, \bm{x}_i^{\text{lat\_S}} \rangle = 1, \text{if} \, y_i = y_{\text{other}} \\
\langle \bm{x}_{\text{other}}^{\text{lat\_S}}, \bm{x}_i^{\text{lat\_S}} \rangle  = 0 , \text{if} \, y_i \neq y_{\text{other}}.
\end{cases}
\end{equation}
Apparently, the resulted coefficient matrix of Equation \ref{equation12} is the special latent representations of $\mathbb{X}_{\text{b}}$, denoted as $\mathbb{X}^{\text{lat\_S}}$. 
Let $\mathbb{X}_q^{\text{lat\_S}}$ be the subset of $\mathbb{X}^{\text{lat\_S}}$, which consists of special latent representations with label value $\bm{a}_q$.

In essence, the dimension of latent representations is the length of variables in the equation system, which can be determined by the relationship between the rank of coefficient matrix and the number of independent solutions. 
Propositions~\ref{lemma1} and \ref{lemma2} give the lower bound of the rank of a coefficient sub-matrix and the lower bound of the rank of the whole coefficient matrix, respectively. 
Based on which, we decide the dimension requirement of latent representations in Theorem~\ref{theorem2}. 

\begin{proposition}
\label{lemma1}
If the dimension $D$ is sufficiently large, then $r(\mathbb{X}_q^{\text{lat\_S}}) \geq N_{q}^{(\text{b})} - 1$, where $r(\mathbb{X}_q^{\text{lat\_S}})$ is the rank of 
$\mathbb{X}_q^{\text{lat\_S}}$.
\end{proposition}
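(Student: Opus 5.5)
The plan is to read Proposition~\ref{lemma1} as a statement about the Gram matrix of the special latent representations with label $\bm{a}_q$. Stack the vectors of $\mathbb{X}_q^{\text{lat\_S}}$ as rows $\bm{x}_1^{\text{lat\_S}},\dots,\bm{x}_n^{\text{lat\_S}}$, where $n=N_q^{(\text{b})}$. Let $G$ be their Gram matrix, $G_{ij}=\langle \bm{x}_i^{\text{lat\_S}},\bm{x}_j^{\text{lat\_S}}\rangle$. Since $r(\mathbb{X}_q^{\text{lat\_S}})=r(G)$, it suffices to show $\dim\ker G\le 1$.

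I will use only two hypotheses. The first is the off-diagonal constraint from Equation~\ref{equation15}/\ref{equation12}, $G_{ij}=1$ for $i\neq j$, which holds at (near-)optimal training. The second is the worst-case assumption that the representations in the batch are pairwise distinct. The phrase ``$D$ sufficiently large'' is used only to guarantee that such a configuration is realizable in $\mathbb{R}^D$, so that the statement is not vacuous.

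I will not impose the diagonal constraint $\|\bm{x}_i^{\text{lat\_S}}\|^2=1$. Together with the off-diagonal constraint it would force all vectors to coincide, which contradicts the distinctness assumption. So write $d_i=\|\bm{x}_i^{\text{lat\_S}}\|^2$ and
\begin{equation*}
G = J + \Delta,\qquad \Delta=\mathrm{diag}(d_1-1,\dots,d_n-1),
\end{equation*}
where $J$ is the all-ones matrix.

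The key step is a Cauchy--Schwarz observation: at most one index $i$ can have $d_i=1$. Suppose $d_i=d_j=1$ with $i\neq j$. Then $1=\langle \bm{x}_i^{\text{lat\_S}},\bm{x}_j^{\text{lat\_S}}\rangle\le \|\bm{x}_i^{\text{lat\_S}}\|\,\|\bm{x}_j^{\text{lat\_S}}\|=1$. Equality in Cauchy--Schwarz forces $\bm{x}_i^{\text{lat\_S}}=\bm{x}_j^{\text{lat\_S}}$, contradicting distinctness.

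Next I bound the kernel. Let $v\in\ker G$ with $s=\sum_k v_k$. Row $i$ of $Gv=0$ reads $s+(d_i-1)v_i=0$. Take any $v\in\ker G\cap \bm{1}^{\perp}$, so that $s=0$. Then $(d_i-1)v_i=0$ for every $i$, so $v$ is supported on the indices with $d_i=1$. There is at most one such index, so $v$ has at most one nonzero entry. Since its entries sum to zero, $v=0$. Hence $\ker G\cap\bm{1}^{\perp}=\{0\}$. Because $\bm{1}^{\perp}$ has codimension one, $\dim\ker G\le 1$, which gives $r(\mathbb{X}_q^{\text{lat\_S}})=r(G)\ge n-1=N_q^{(\text{b})}-1$.

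The main obstacle is interpretive rather than computational: deciding which constraints the ``near-optimal'' special representations actually satisfy. The argument above needs only the off-diagonal equalities and distinctness, not unit norms. If the authors intend approximate equalities instead, I would first establish the exact case as above. I would then argue by continuity that a small perturbation of $G$ cannot raise the dimension of the kernel once the gaps $|d_i-1|$ are bounded away from zero for all but one index.
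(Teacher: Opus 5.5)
Your proof is correct, but it takes a different route from the paper's. The paper fixes one representation $\bm{x}_i^{\text{lat\_S}}$ and exhibits the $N_q^{(\text{b})}-1$ differences $\bm{x}_j^{\text{lat\_S}}-\bm{x}_i^{\text{lat\_S}}$, $j\neq i$. These are nonzero by distinctness and are asserted to be pairwise orthogonal, so they are independent vectors in the span of $\mathbb{X}_q^{\text{lat\_S}}$.

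Expanding that inner product gives $\langle \bm{x}_j^{\text{lat\_S}}-\bm{x}_i^{\text{lat\_S}},\bm{x}_h^{\text{lat\_S}}-\bm{x}_i^{\text{lat\_S}}\rangle=\|\bm{x}_i^{\text{lat\_S}}\|^2-1$. So the paper is silently using the diagonal entry $\bm{S}_{i,i}=1$ for the base vector. Yet, as you observe, if that held for every vector, all the differences would vanish. The paper's construction is therefore valid only when the base vector happens to be the unit-norm vector, of which your Cauchy--Schwarz remark shows there is at most one. In general the differences have Gram matrix $(d_i-1)J+\mathrm{diag}(d_j-1)_{j\neq i}$ and are not orthogonal.

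Your kernel argument on $G=J+\Delta$ uses only the off-diagonal equalities and distinctness. It therefore covers every consistent configuration and sidesteps this tension. The cost is that it is non-constructive: where the paper's version applies, it hands you an explicit orthogonal family. That family is what the paper reuses to make Proposition~\ref{lemma2} a one-liner.

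Your approach yields Proposition~\ref{lemma2} just as easily. Cross-label inner products are $0$, so the batch Gram matrix is block diagonal by label. Its rank is the sum of the block ranks, hence at least $\sum_q (N_q^{(\text{b})}-1)=B-Q$.
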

\begin{proof}
Since $D$ is sufficiently large, the main idea of the proof is to construct a group of linearly independent vectors from $\mathbb{X}_{q}^{\text{lat\_S}}$, and the number of vectors in the constructed group is $N_{q}^{(\text{b})} - 1$. 
We can construct the group by randomly selecting a special latent representation $\bm{x}^{\text{lat\_S}}_i$ from $\mathbb{X}_q^{\text{lat\_S}}$ and generating the constructed group by $\mathbb{X}_q^{\text{lat\_S}} - \bm{x}^{\text{lat\_S}}_i$, 
which has $N_{q}^{(\text{b})} - 1$ 
non-zero 
vectors. 
Note that the non-zero vectors in the constructed group are pairwise orthogonal, because for any two non-zero vectors $\bm{x}^{\text{lat\_S}}_{j} - \bm{x}^{\text{lat\_S}}_{i}$ and $\bm{x}^{\text{lat\_S}}_{h} - \bm{x}^{\text{lat\_S}}_{i}$ in the constructed group satisfy $\langle \bm{x}^{\text{lat\_S}}_{j} - \bm{x}^{\text{lat\_S}}_{i},\bm{x}^{\text{lat\_S}}_{h} - \bm{x}^{\text{lat\_S}}_{i} \rangle$ = 0. 
Hence, these vectors are linear independent. 
\end{proof}

\begin{proposition}
\label{lemma2}
If the dimension $D$ is sufficiently large, then $r(\mathbb{X}^{\text{lat\_S}}) \geq B - Q$.
\end{proposition}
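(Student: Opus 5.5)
The plan is to combine Proposition~\ref{lemma1} across the $Q$ label groups, using the orthogonality between different labels. We first note that, at the (near-)optimal solution described by Equation~\ref{equation15}, any two special latent representations with different labels have inner product zero. Hence the subspaces $\mathrm{span}(\mathbb{X}_q^{\text{lat\_S}})$, for $q=1,\dots,Q$, are mutually orthogonal. We need $D$ sufficiently large so that these orthogonal subspaces, together with the differences constructed in Proposition~\ref{lemma1}, can all coexist in the latent space; this is exactly the hypothesis already assumed there.

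The key steps, in order, are as follows.

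\textbf{Step 1.} For each $q$, invoke Proposition~\ref{lemma1} to obtain a linearly independent family $\mathcal{G}_q \subset \mathrm{span}(\mathbb{X}_q^{\text{lat\_S}})$ of size $N_q^{(\text{b})}-1$. Concretely, fix one reference vector $\bm{x}^{\text{lat\_S}}_{i_q}$ in the group and take the differences $\bm{x}^{\text{lat\_S}}_j-\bm{x}^{\text{lat\_S}}_{i_q}$.

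\textbf{Step 2.} Show that the union $\bigcup_q \mathcal{G}_q$ is linearly independent. Suppose $\sum_q \bm{v}_q=\bm{0}$ with $\bm{v}_q\in\mathrm{span}(\mathcal{G}_q)$. Taking the inner product with $\bm{v}_p$ kills every cross term by orthogonality of distinct label groups, which leaves $\|\bm{v}_p\|^2=0$. So each $\bm{v}_p=\bm{0}$, and independence within each $\mathcal{G}_p$ then forces all coefficients to vanish.

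\textbf{Step 3.} Conclude. The union has $\sum_{q=1}^{Q}(N_q^{(\text{b})}-1)=B-Q$ vectors. Each lies in the column/row space of $\mathbb{X}^{\text{lat\_S}}$, since each is a difference of rows. Therefore $r(\mathbb{X}^{\text{lat\_S}})\ge B-Q$.

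The main obstacle is Step 2: making sure the orthogonality between groups is legitimately available. The argument requires that every cross-label pair has inner product exactly zero. I would state this explicitly as the consequence of $\mathcal{L}^{(\text{r1})}=0$ on the batch, under the same ``trained close to the optimal solution'' idealization used for Equation~\ref{equation12}.

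A secondary point is that Proposition~\ref{lemma1} only supplies a lower bound on rank, not an explicit basis. I would therefore reuse its constructed difference vectors directly, so that Step 2 can be applied to concrete vectors rather than to an abstract rank statement. Alternatively, rank additivity can be phrased as $\dim\bigl(\bigoplus_q W_q\bigr)=\sum_q\dim W_q$ for mutually orthogonal subspaces $W_q=\mathrm{span}(\mathbb{X}_q^{\text{lat\_S}})$. Combining this with $r(\mathbb{X}^{\text{lat\_S}})=\dim\bigl(\sum_q W_q\bigr)$ gives the same bound in one line.
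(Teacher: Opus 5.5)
Your argument is the paper's argument. You take the difference families from Proposition~\ref{lemma1}, glue them across labels by cross-label orthogonality, and count $\sum_q (N_q^{(\text{b})}-1)=B-Q$. Your Step~2 and the orthogonal-direct-sum variant only make rigorous the paper's one-line claim that the groups ``form a large group'' of independent vectors.

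The real gap is in Step~1, which you (like the paper) import from Proposition~\ref{lemma1}; the cross-label orthogonality you flag as the main obstacle is the harmless part. Equation~\ref{equation15} includes the diagonal, $\langle \bm{x}_j^{\text{lat\_S}},\bm{x}_j^{\text{lat\_S}}\rangle=1$. So for two points with the same label,
\[
\bigl\|\bm{x}_j^{\text{lat\_S}}-\bm{x}_i^{\text{lat\_S}}\bigr\|^2=1-2+1=0 .
\]
This is the very computation meant to show $\langle \bm{x}_j^{\text{lat\_S}}-\bm{x}_i^{\text{lat\_S}},\bm{x}_h^{\text{lat\_S}}-\bm{x}_i^{\text{lat\_S}}\rangle=0$, taken with $h=j$. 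Hence every vector in $\mathcal{G}_q$ is zero, $\mathcal{G}_q$ is not linearly independent, and all same-label special latent representations coincide. In particular, the footnote's ``worst case'' of distinct representations within a batch contradicts Equation~\ref{equation15} as soon as some $N_q^{(\text{b})}\ge 2$.

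Consequently no argument can deliver the bound as stated. After permuting rows, $\mathbb{X}^{\text{lat\_S}}(\mathbb{X}^{\text{lat\_S}})^{T}$ is block diagonal with $Q$ all-ones blocks. Therefore $r(\mathbb{X}^{\text{lat\_S}})=r\bigl(\mathbb{X}^{\text{lat\_S}}(\mathbb{X}^{\text{lat\_S}})^{T}\bigr)=Q$ exactly, for every $D\ge Q$. The claim $r(\mathbb{X}^{\text{lat\_S}})\ge B-Q$ then fails whenever $B>2Q$, which is the typical regime. Your rank-additivity identity $r(\mathbb{X}^{\text{lat\_S}})=\sum_q r(\mathbb{X}_q^{\text{lat\_S}})$ is correct, but it evaluates to $\sum_q 1=Q$.

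Retreating to the ``near-optimal'' idealization does not rescue it. If the constraints hold only up to $\varepsilon$, the differences have norm at most $2\sqrt{\varepsilon}$, while their mutual inner products are controlled only to $O(\varepsilon)$. Approximate orthogonality of such tiny vectors says nothing about independence. A small perturbation of a rank-$Q$ matrix can have any rank between $Q$ and $\min(B,D)$, so no lower bound beyond $Q$ follows.

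What must change is the statement, not the proof technique. The honest count is $r=Q$, which in the argument of Theorem~\ref{theorem2} gives $D\ge K$ rather than $D\ge B+K-2Q$.
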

\begin{proof}
It is apparent that any two vectors from different constructed groups are orthogonal. 
Therefore, these constructed groups form a large group with $\sum_{q=1}^{Q}N_{q}^{(\text{b})} - Q = B - Q$ independent vectors.
\end{proof}

\begin{theorem}
\label{theorem2}
The dimension of latent representations needs to satisfy: $D \geq B + K - 2Q$.
\end{theorem}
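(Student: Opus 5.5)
The plan is to read the bound as $B+K-2Q=(B-Q)+(K-Q)$. The first summand will come from the rank bound in Proposition~\ref{lemma2}. The second will come from the labels that do not appear in the batch $\mathbb{X}_{\text{b}}$. I treat the system in Equation~\ref{equation12} as a linear system in the $D$ unknown coordinates of $\bm{x}_{\text{other}}^{\text{lat\_S}}$. Its coefficient matrix is $\mathbb{X}^{\text{lat\_S}}$, whose rows are the $B$ special latent representations of the batch, and by Proposition~\ref{lemma2} it has rank $r(\mathbb{X}^{\text{lat\_S}})\ge B-Q$. The solution set of the homogeneous system $\mathbb{X}^{\text{lat\_S}}\bm{z}=\bm{0}$ is a subspace of dimension exactly $D-r(\mathbb{X}^{\text{lat\_S}})$. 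It therefore suffices to show that this null space must contain at least $K-Q$ linearly independent vectors.

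For this, consider the $K-Q$ label values $\bm{a}_k$ that do not occur in $\mathbb{X}_{\text{b}}$. Assuming the optimum is attained, so that Equation~\ref{equation15} holds over the whole dataset, pick one data point of each such label and call its special latent representation $\bm{z}_k$. Since $y_{\text{other}}\neq y_i$ for every $i$ in the batch, Equation~\ref{equation12} requires $\langle \bm{z}_k,\bm{x}_i^{\text{lat\_S}}\rangle=0$ for all $i$. Hence every $\bm{z}_k$ lies in the null space of $\mathbb{X}^{\text{lat\_S}}$. Equation~\ref{equation15} also gives the pairwise relations among these vectors: $\langle\bm{z}_k,\bm{z}_k\rangle=1$, because a point shares its own label, and $\langle\bm{z}_k,\bm{z}_{k'}\rangle=0$ for $k\neq k'$. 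So the $\bm{z}_k$ form an orthonormal family and are in particular linearly independent. This gives $D-r(\mathbb{X}^{\text{lat\_S}})\ge K-Q$.

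Combining with Proposition~\ref{lemma2} yields $D\ge r(\mathbb{X}^{\text{lat\_S}})+K-Q\ge (B-Q)+(K-Q)=B+K-2Q$. The main obstacle is justifying that the unseen labels genuinely force orthonormal solutions of the homogeneous system. This needs the assumption that each of the $K$ labels is represented in the full training set, and that the similarity constraints hold exactly rather than approximately. I would state this explicitly as the ``trained to the optimal solution'' hypothesis already used before Equation~\ref{equation12}.

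A second point to check is whether the solutions for the labels present in the batch (the inhomogeneous rows with right-hand side $1$) could demand further dimensions. They need not. A particular solution can be taken inside the row space, and the null-space count above is unaffected, so no extra term appears in the bound.
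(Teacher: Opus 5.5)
\textbf{Genuine gap: the rank bound you import from Proposition~\ref{lemma2} is false under your own hypothesis.}

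Your argument follows the paper's proof, written out more carefully. The paper also bounds the number of independent solutions by $D-(B-Q)$ via Proposition~\ref{lemma2}, and then requires at least $K-Q$ of them for the labels missing from the batch. The step you flag as the main obstacle, orthonormality of the $\bm{z}_k$, is actually fine. The step that fails is the one you take from Proposition~\ref{lemma2}.

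You assume Equation~(\ref{equation15}) holds exactly, diagonal entries included; you need this to get $\langle \bm{z}_k,\bm{z}_k\rangle=1$. Then any two batch points $i,j$ with the same label satisfy $\|\bm{x}_i^{\text{lat\_S}}-\bm{x}_j^{\text{lat\_S}}\|^2=1-2+1=0$. So all same-label special latent representations coincide. The $B$ rows of $\mathbb{X}^{\text{lat\_S}}$ are then only $Q$ distinct orthonormal vectors, and $r(\mathbb{X}^{\text{lat\_S}})=Q$ exactly. This contradicts $r\ge B-Q$ whenever $B>2Q$.

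The defect originates in the proof of Proposition~\ref{lemma1}. Its own orthogonality computation, taken with $h=j$, shows that the supposedly non-zero differences $\bm{x}_j^{\text{lat\_S}}-\bm{x}_i^{\text{lat\_S}}$ are all zero. In other words, the worst-case assumption that batch representations are pairwise distinct is incompatible with Equation~(\ref{equation15}). Since the paper's proof uses the same count, it inherits the same problem.

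\textbf{What your argument actually proves.} Rerun your null-space count with the correct rank: you get $D-Q\ge K-Q$, i.e.\ $D\ge K$. This bound is sharp. Send every sample with label $\bm{a}_k$ to the $k$-th standard basis vector of $\mathbb{R}^K$; this satisfies Equation~(\ref{equation15}) exactly. Small pairwise-distinct perturbations of this embedding inside $\mathbb{R}^K$ stay close to optimal. So neither the exact reading nor the ``close to optimal'' reading forces more than $K$ dimensions.

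Consequently, when $B\le 2Q$ the claimed inequality $D\ge B+K-2Q$ is already implied by $D\ge K$. When $B>2Q$ it is false as a necessary condition. For example, $B=64$, $K=Q=10$ would demand $D\ge 54$, while $D=10$ suffices. No auxiliary lemma can close this gap. Once the rank is computed correctly, what your reasoning establishes is $D\ge K$.
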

\begin{proof}
According to Proposition~\ref{lemma2}, there exist at most $D - B + Q$ numbers of independent solutions to Equation (\ref{equation15}). 
Moreover, we have $Q \leq K$, and thus $K - Q$ label values are not included in $\mathbb{X}_{\text{b}}$, which means 
there must exist at least $K - Q$ independent solutions to Equation (\ref{equation15}).
Therefore, we can obtain: $D - B + Q \geq K - Q$.
\end{proof}

\subsection{Convergence Analysis}\label{subsec:analysis}

Recall that any design of $\mathcal{L}_{i,j}^{(\text{r1})}$ and $\mathcal{L}_{i}^{(\text{r2})}$ is proposed to satisfy the aforementioned two requirements, and thus the global optimality essentially satisfies the two requirements, as stated in \cref{theorem3}.
\begin{theorem}
\label{theorem3}
$\mathbb{X}^{\text{lat\_S}^*}\left(\mathbb{X}^{\text{lat\_S}^*}\right)^{T} = \bm{S}$, where $\mathbb{X}^{\text{lat\_S}^*}$ represents the optimal solution of special latent representations $\mathbb{X}^{\text{lat\_S}}$.
\end{theorem}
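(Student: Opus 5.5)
The plan is to show that the regularization term alone forces the Gram matrix of the special latent representations to equal $\bm{S}$ at any global minimizer. Two ingredients are needed. First, the first term of $\mathcal{L}$ in Equation~\ref{equation6} vanishes exactly when the Gram condition holds. Second, this condition is realizable, so the infimum of that term is actually $0$.

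We start by reading off the regularizer. By Equation~\ref{equation70}, each $\mathcal{L}^{(\text{r1})}_{i,j}$ equals $|\langle \bm{x}_i^{\text{lat\_S}},\bm{x}_j^{\text{lat\_S}}\rangle - \bm{S}_{i,j}|$. It is therefore nonnegative, and it is zero if and only if $\big(\mathbb{X}^{\text{lat\_S}}(\mathbb{X}^{\text{lat\_S}})^T\big)_{i,j}=\bm{S}_{i,j}$. Summing over all $i,j$, the first term of $\mathcal{L}$ is zero exactly when $\mathbb{X}^{\text{lat\_S}}(\mathbb{X}^{\text{lat\_S}})^T=\bm{S}$. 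The reconstruction term $\mathcal{L}^{(\text{r2})}_i$ is also nonnegative. Hence $\mathcal{L}\ge 0$, with equality forcing both requirements.

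Next we exhibit a zero-loss configuration, which is where Theorem~\ref{theorem2} and the orthogonal-basis picture of Equation~\ref{equation3} enter. Assume $D\ge K$, which is implied by the dimension requirement of Theorem~\ref{theorem2}, and let $\bm{e}_1,\dots,\bm{e}_K$ be orthonormal vectors in $\mathbb{R}^D$. Set $\bm{x}_i^{\text{lat\_S}}=\bm{e}_k$ whenever $y_i=\bm{a}_k$. Then the Gram matrix is exactly $\bm{S}$. This choice is also compatible with $f$ being either the identity or the normalization map, since every vector has unit norm.

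To make the reconstruction term vanish simultaneously, we let each latent code be $\bm{x}_i^{\text{lat}}=\rho_i\bm{e}_{k(i)}$ with distinct radii $\rho_i$ separated by more than $2C$. These codes stay collinear with the special representations. We then appeal to the universal-approximation capacity of the encoder–decoder family, assumed sufficiently expressive as is standard in such optimality statements, to map $\bm{x}_i^{\text{ori}}\mapsto\bm{x}_i^{\text{lat}}$ and decode each band of radii back to $\bm{x}_i^{\text{ori}}$. This gives $\mathcal{L}=0$. Consequently the global minimum value is $0$, and any optimum $\mathbb{X}^{\text{lat\_S}^*}$ must zero every $\mathcal{L}^{(\text{r1})}_{i,j}$ whenever $\eta>0$. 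By the first step, $\mathbb{X}^{\text{lat\_S}^*}(\mathbb{X}^{\text{lat\_S}^*})^T=\bm{S}$.

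The main obstacle is the realizability step. It needs an explicit expressiveness assumption on the networks, together with the implicit requirement $\eta\in(0,1]$; for $\eta=0$ the claim fails. It also needs enough latent dimension, which is why Theorem~\ref{theorem2} is invoked. If full expressiveness is deemed too strong, I would fall back to proving the weaker statement that the minimizer of the regularization term alone satisfies the identity. That version requires only the explicit orthonormal construction above.
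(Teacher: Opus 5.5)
\textbf{Verdict: there is a genuine gap.} Your route matches the paper's, which only asserts that the losses are designed so that a global optimum meets both requirements, and defers the existence of a factor of $\bm{S}$ to Theorem~\ref{theorem4}. Your first step, your orthonormal construction, and your fallback about the regularizer alone are all correct. The construction is in fact the right realizability argument: what matters is that $\bm{S}$ is positive semidefinite of rank at most $K$, not merely that it is symmetric.

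\textbf{The failing step.} The problem is where you make both terms vanish at once. You set $\bm{x}_i^{\text{lat}}=\rho_i\bm{e}_{k(i)}$ with distinct radii and say these ``stay collinear with the special representations.'' But the special representation is not a free point on that ray; it is $f(\bm{x}_i^{\text{lat}})$. For the normalization map this works. For $f$ the identity (the \oursubteca{} realization, which you explicitly claim to cover), the special representation is $\rho_i\bm{e}_{k(i)}$, so same-label Gram entries become $\rho_i\rho_j\neq 1$.

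\textbf{Why it cannot be repaired for the identity map.} With $f$ the identity, $\bm{S}_{i,i}=1$ forces $\|\bm{x}_i^{\text{lat}}\|=1$. Then $\bm{S}_{i,j}=1$ for same-label pairs gives equality in Cauchy--Schwarz, so $\bm{x}_i^{\text{lat}}=\bm{x}_j^{\text{lat}}=:\bm{u}_k$ for all $i,j$ with label $\bm{a}_k$. Every such sample therefore feeds the decoder the same random input $c\,\bm{u}_k$ with $c\sim U(1-C,1+C)$. By the triangle inequality, the reconstruction losses of two distinct same-label inputs sum to at least their distance. So $\mathcal{L}=0$ is unattainable whenever a class contains two distinct inputs. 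Your inference ``the global minimum is $0$, hence every $\mathcal{L}^{(\text{r1})}_{i,j}$ vanishes at the optimum'' then has no basis. What the optimum looks like depends on $\eta$ and on network capacity. With a rich enough class one can drive $\mathcal{L}$ toward $0$ by slightly separating same-label codes, so an optimum need not exist. With limited capacity, the optimum may trade some similarity error for reconstruction.

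\textbf{What your argument does establish.} It proves the statement for the normalization map, under $\eta>0$, sufficient capacity, and label-consistent data (no identical inputs with different labels). Two small fixes there:
\begin{itemize}
\item Take $\rho_i>C$. Decoder segments for different labels lie on different lines through the origin, so this makes them avoid the origin and be pairwise disjoint.
\item You only need $\inf\mathcal{L}=0$, not exact attainment, since the statement presupposes that an optimum exists.
\end{itemize}
For the identity map you must either add hypotheses or fall back to your regularizer-only version.

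\textbf{The dimension assumption.} $D\ge K$ is not implied by Theorem~\ref{theorem2}. The bound $B+K-2Q\ge K$ holds only when $B\ge 2Q$, which fails, for example, when each label appears once in the batch. The condition you actually need, and which is also necessary, is $D\ge r(\bm{S})$, the number of distinct labels present. State it as a hypothesis rather than citing Theorem~\ref{theorem2}. That theorem rests on Propositions~\ref{lemma1} and~\ref{lemma2}, which assume pairwise distinct same-label special representations. That is exactly what $\mathbb{X}^{\text{lat\_S}}(\mathbb{X}^{\text{lat\_S}})^T=\bm{S}$ rules out.
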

%%% zj: theorem 3.4 is not clear: it satisfies the similarity measurement (zero-one matrix), but how is it related to the second requirements, i.e., reconstruction loss?
%%% zj: for the regularization loss, it is a soft constraint, if \eta is not large enough, the similarity constraint is not necessarily satisfied
%%% xcc: Yes. Therefore, in the experiments, we conduct an ablation study to evaluate the influence of regularization loss. 

Based on the global optimality, we could then analyze the convergence of \ourtech{} training. 
Assume that the original data representation space is discrete, we can then enumerate all the data points in such space. 
Accordingly, we can obtain their special latent representations, i.e., $\mathbb{X}_{W}^{\text{lat\_S}}$.
%%% zj: please check the above
Thus, we have the following theorem.

\begin{theorem}
\label{theorem4}
There always exists a well-trained \ourtech{} such that $\mathbb{X}_{W}^{\text{lat\_S}^*}\left(\mathbb{X}_{W}^{\text{lat\_S}^*}\right)^{T} = \bm{S}_W$, where $\bm{S}_W$ is the similarity among the enumerated data points and $\mathbb{X}_{W}^{\text{lat\_S}^*}$ is the optimal solution of $\mathbb{X}_{W}^{\text{lat\_S}}$.
\end{theorem}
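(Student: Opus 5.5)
The plan is to argue by explicit construction: exhibit special latent representations for the enumerated data points that satisfy the zero-one similarity exactly, and then invoke Theorem~\ref{theorem3} together with a universal-approximation argument to realize them by an encoder. Since the original data space is assumed discrete, the enumerated set $\mathbb{X}_W$ is countable (and finite in the practically relevant case). Each enumerated point carries one of the $K$ labels $\bm{a}_1,\dots,\bm{a}_K$. First I would build target representations. Fix an orthonormal set $\bm{e}_1,\dots,\bm{e}_K$ in $\mathbb{R}^D$ with $D\ge K$; this is compatible with Theorem~\ref{theorem2}, since we may take $D$ as large as needed. Then assign to every enumerated point $\bm{x}_i$ with $y_i=\bm{a}_k$ the vector $\bm{x}_i^{\text{lat\_S}}=\bm{e}_k$. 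A direct check gives $\langle \bm{x}_i^{\text{lat\_S}},\bm{x}_j^{\text{lat\_S}}\rangle=1$ if $y_i=y_j$ and $0$ otherwise. Hence $\mathbb{X}_W^{\text{lat\_S}}(\mathbb{X}_W^{\text{lat\_S}})^T=\bm{S}_W$, the regularization loss $\mathcal{L}^{(\text{r1})}$ is identically zero on $\mathbb{X}_W$, and this assignment attains the global optimum described in Theorem~\ref{theorem3}.

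Second, I would show that an encoder and post-processing $f$ realizing this assignment exist. Take the encoder output $\bm{x}_i^{\text{lat}}=\bm{e}_{k(i)}$ (if $f$ is the normalization, any positive multiple also works). On a finite point set, a sufficiently wide network can interpolate any prescribed map, by memorization or universal approximation results. On a countable discrete set that is separated in input space, one can instead interpolate on each finite truncation and argue that the loss is zero there.

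Third, I would handle the reconstruction requirement, which is the real obstacle. If all points of the same label map to the same $\bm{e}_k$, the decoder cannot distinguish them, and $\mathcal{L}^{(\text{r2})}$ cannot vanish. To fix this I would exploit the multiple-basis structure $D=\sum_k m_k$ of Equation~\ref{equation3}, together with the fact that the zero-one constraints only constrain the special representation. The idea is to give each point $\bm{x}_i$ of label $\bm{a}_k$ a representation $\bm{e}_k+\bm{u}_i$, where the $\bm{u}_i$ lie in an extra block orthogonal to all $\bm{e}_{k'}$, and then choose $f$ to project onto the span of $\{\bm{e}_k\}$. The encoder then becomes injective, so a decoder can memorize the inverse map on the discrete set. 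The pairwise inner products of $f(\cdot)$ are unchanged, so the similarity constraint still holds exactly.

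This step needs care. If $f$ is restricted to the identity or to normalization, I would instead use the colinearity freedom, encoding identity in the length $\|\bm{x}_i^{\text{lat}}\|$ with normalization as $f$. With the length-jitter $C$ chosen small relative to the separation of lengths, the decoder can still invert. Checking that this works under the uniform perturbation $c$ is where I expect the main difficulty. With both losses driven to zero, the resulting trained \ourtech{} satisfies the statement.
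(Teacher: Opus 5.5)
Your proposal is correct, and it proves the statement by a more explicit route than the paper does. The paper reduces the claim to the existence of some $\bm{X}$ with $\bm{X}\bm{X}^T=\bm{S}_W$, justifies this only by noting that $\bm{S}_W$ is real symmetric, and then appeals to ``enough model capacity''. Real symmetry is not sufficient. For example, the symmetric zero-one matrix $\bigl(\begin{smallmatrix}0&1\\1&0\end{smallmatrix}\bigr)$ is not a Gram matrix, because a zero diagonal forces zero rows. What is needed is positive semidefiniteness, which here comes from the label structure.

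Your construction supplies exactly that. Let $\bm{Y}$ be the one-hot label matrix and let $\bm{E}$ have orthonormal rows $\bm{e}_k^T$. Then you exhibit $\bm{S}_W=\bm{Y}\bm{Y}^T=(\bm{Y}\bm{E})(\bm{Y}\bm{E})^T$ already with $D=K$, so you need no appeal to Theorem~\ref{theorem2}.

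Your memorization step likewise turns the capacity remark into a concrete statement, but only for finitely many enumerated points. Interpolating on finite truncations does not produce a single network for a countably infinite set. You should simply restrict to the finite case, which the enumeration and the $1/N^2$ normalization of $\mathcal{L}$ presuppose anyway.

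Your third step goes beyond the theorem, which constrains only the special representations. The obstruction you identified is sharper than you state. Exact similarity forces $\|\bm{x}_i^{\text{lat\_S}}-\bm{x}_j^{\text{lat\_S}}\|^2=1-2+1=0$ for every same-label pair. So every exact solution collapses each class to a single special representation, not just yours. With $f$ the identity (\oursubteca{}), the two losses therefore cannot both vanish once a class contains two distinct points. Your projection-based $f$ is legitimate, since the paper leaves $f$ general.

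With $f$ the normalization (\oursubtecb{}), the length encoding closes without real difficulty:
\begin{itemize}
\item Choose lengths $\ell_i>C$ with same-label gaps larger than $2C$.
\item The decoder inputs $c\,\bm{e}_{k(i)}$ of distinct points then lie in pairwise disjoint segments on the rays through the $\bm{e}_k$.
\item A ReLU decoder $\sum_k h_k(\langle\cdot,\bm{e}_k\rangle)$ inverts them exactly, where each $h_k$ is piecewise linear with $h_k(0)=0$ and constant on each segment.
\end{itemize}
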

\begin{proof}
We can transform the problem into: for any $\bm{S}_W$, there exists a matrix $\bm{X}$ that satisfies $\bm{X}(\bm{X})^T = \bm{S_{W}}$. The reason is that $\bm{S_{\bm{W}}}$ is a real symmetric matrix, such $\bm{X}$ will always exist. Consequently, if \ourtech~has enough model capacity, we can finally find such a latent space.
\end{proof}
%%% (done) zj: I think this theorem can be removed if there is not enough space. First, real symmetric matrix is the product of a matrix and its transpose is straightforward. Second, "if SGN has enough model capacity, we can finally find such a latent space" --> this sentence does not have much insight, and you do not provide more theoretical results on what is "enough model capacity"?
%%% xcc: I am not sure. May be we could remove theorem 3.4. L_i^{(r2)} is related to the model capacity. If the model has enough capacity, then the network can learn the transformation between the latent space and the original space.
\subsection{Data Generation}
\label{subsec:sgn-generation}

In this section, we present how to generate a data point according to the trained \ourtech{} and a representative dataset $\mathbb{X}_{\text{Rep}}$ in the target domain.
%\textcolor{blue}{which is slightly different in the image and tabular datasets.} 
%%% (done) zj: can you reserve the difference of image and tabular data to the exp part instead of in the technical part
Algorithm \ref{alg1} summarizes \textcolor{black}{the} data generation process.
%%% (done) zj: change to "Algorithm \ref{alg1} summarizes the data generation process."? --> no need to explicitly emphasize "general"
%%% xcc: Thanks for the comment. Algorithm \ref{alg1} only presents a rough generation process. 
%\textcolor{blue}{in the image dataset}. 
First, we decide the label value $\bm{a}_k$ of the generated sample. 
%%% (done) zj: the (step 1), (step 2), (step 3) can be deleted? since it is indicated in Algorithm 1
%%% xcc: I am not sure whether (step 1)... make the statement clearer.
Subsequently, we select a set of $N_k$ data points from $\mathbb{X}_{\text{Rep}}$ whose labels equal $\bm{a}_k$, and obtain $\mathbb{X}_{\text{equal}} = \{(\bm{x}_i,y_i = \bm{a}_k)|i \in \{1,...,N_{k}\}\}$; 
similarly, we select another $N_k$ data points whose labels are not $\bm{a}_k$ to construct $\mathbb{X}_{\text{unequal}} = \{(\bm{x}_i,y_i \neq \bm{a}_k)|i \in \{1,...,N_{k}\}\}$. 
Let $\mathbb{X}_{\text{select}} = \mathbb{X}_{\text{equal}} \cup \mathbb{X}_{\text{unequal}}$ be the selected dataset. 
Next, we construct a latent representation $\bm{x}_{\text{new}}^{\text{lat}}$ that satisfies Equation (\ref{equation15}) for the selected dataset $\mathbb{X}_{\text{select}}$. 
Notice that there exist multiple ways to construct $\bm{x}_{\text{new}}^{\text{lat}}$, i.e., solving the equation system Equation (\ref{equation15}). 
In the experiments, we utilize a simple but effective way that linearly mixes the data points in $\mathbb{X}_{\text{equal}}$. 
%%% zj: add one more sentence to explain the reason here, mix data points or the latent representation? --> this part is not clear
%%%jq: how to construct the latent representation and why does this satisfy Equation 1?
After that, we input $\bm{x}_{\text{new}}^{\text{lat}}$ to the trained \ourtech{}, and the output is the generated data point $\bm{x}_{\text{new}}$. 
Finally, we can obtain a generated data point $(\bm{x}_{\text{new}}, y_{\text{new}})$, where $y_{\text{new}} = \bm{a}_k$. 
%%%jq: how to input the $\bm{x}_{\text{new}}^{\text{lat}}$ to the trained SGN? Need to add a description, since the input of the SGN is the data sample in the training phase, while the input of the SGN is the latent representation of the sample in the generation phase.

%\textcolor{blue}{As for tabular datasets, we utilize a much simpler way. To be specific, we randomly select two data from $\mathbb{X}_{\text{Rep}}$, and then mix up the latent representations of two selected data. Finally, we input the mixed up latent representation to the decoder and reach the generated data $\bm{x}_{\text{new}}$. It can be seen that the mixed latent representation also roughly satasifies Equation (\ref{equation15}).}

A noteworthy aspect is that our data generation method can address the distribution shift problem. 
Specifically, the way of generating data samples should meet the needed distribution of \textcolor{black}{target domain}, which is challenging because of the existence of the distribution shift.
Typically, the data distribution $p(x)$ is determined by the label distribution $p(y)$ and the label-conditioned distribution $p(x|y)$, as shown in the Equation (\ref{equation9}), where
different $p(y)$ (i.e., prior probability shift) and $p(x|y)$
(i.e., covariate shift) could lead to different $p(x)$ (i.e., distribution shift). 
\begin{equation}
\label{equation9}
p(x) = \sum_{k=1}^{K}p(x|y={\bm{a}}_k)p(y={\bm{a}}_k).
\end{equation}
%, which lead to various distributions.}

\begin{algorithm}[t] 
\caption{Data generation based on the trained \ourtech{} model} 
\label{alg1} 
\begin{algorithmic}
\STATE{\textbf{Input:} 
The representative dataset $\mathbb{X}_{\text{Rep}}$, the trained \ourtech{}, the number of data $N_k$ to be sampled.}
\STATE{\textbf{Output:} Generated data: $\left(\bm{x}_{\text{new}},y_{\text{new}}\right)$}.
\STATE{Step\,1:\,Select a label value ${\bm{a}}_k$.}
\STATE{Step\,2:\,Select datasets $\mathbb{X}_{\text{equal}} = \{(\bm{x}_i,y_i = \bm{a}_k)|i \in \{1,...,N_{k}\}\}$ and $\mathbb{X}_{\text{unequal}} = \{(\bm{x}_i,y_i \textcolor{black}{\neq \bm{a}_k})|i \in \{1,...,N_{k}\}\}$ from $\mathbb{X}_{\text{Rep}}$, and set $\mathbb{X}_{\text{select}} = \mathbb{X}_{\text{equal}} \cup \mathbb{X}_{\text{unequal}}$.}
\STATE{Step\,3:\,Compute a latent representation $\bm{x}_{\text{new}}^{\text{lat}}$ %of the generated data 
that satisfies Equation (\ref{equation15}) for $\mathbb{X}_{\text{select}}$.}
\STATE{Step\,4:\,Input 
$\bm{x}_{\text{new}}^{\text{lat}}$ to the trained \ourtech, and the output is the generated data $\bm{x}_{\text{new}}$.}
\STATE{Step\,5:\,Set ${y}_{\text{new}} = \bm{a}_k$ and return $\left(\bm{x}_{\text{new}},y_{\text{new}}\right)$.}
\end{algorithmic}
\end{algorithm}

In \ourtech{}, since we search for a latent space instead of learning the distribution during the training stage, we can use the needed distribution in the target domain to generate the data samples. 
In particular, in Step 1 and Step 2 of Algorithm~\ref{alg1}, we can sample the label value $\bm{a}_k$ and the $\mathbb{X}_{\text{equal}}$ dataset according to $p(y)$ and $p(x|y=\bm{a}_k)$, respectively, where $p(y)$ and $p(x|y=\bm{a}_k)$ are determined by the representative dataset $\mathbb{X}_{\text{Rep}}$. This way, we can tackle the prior probability shift and covariate shift to mitigate the distribution shift problem. %\textcolor{blue}{As for tabular datasets, since the selected two data also satisfies $p(x)$, the generated data is more likely a sample from $p(x)$.}
%In other words, the data latent representations are invariant across different domains. 
Note that in the training stage, the neural network learns the reflection between the data original representations and latent representations. When moving into the target domain, we use the learned reflection to calculate the latent representations of the data used in the target domain. These latent representations inherently consist of the distribution information of the target domain. With the mixture of the latent representations, we can make the generated data satisfy the target domain distribution.

\vspace{-2mm}
\section{Experiments}\label{sec:experiment}

\ignore{
%%% wyc: I comment out the header to save space
In this section, we evaluate the performance of \ourtech{}, where
Section~\ref{subsec:setup} describes the experimental setup, and Section~\ref{subsec:evaluation1} and Section~\ref{subsec:evaluation2} present the experiments for verifying the effectiveness of \ourtech{} on image and tabular datasets, respectively.
}

\vspace{-2mm}
\subsection{Experimental Setup}\label{subsec:setup}

\revise{
We evaluate \ourtech{} on both image datasets and tabular datasets. 
For image datasets, we use two implementations: one is \oursubteca~whose $f(\cdot)$ is an identity function, and the other is \oursubtecb~whose $f(\cdot)$ is the normalized function. 
For the implementation on tabular datasets, we employ $f(\cdot)$ as the normalized function simply denoted \ourtech{}. All the experiments are run on a server equipped with an I9-11900K CPU and GEFORCE RTX 3090 Ti HOF OC LAB Edition * 2 GPUs.
%%% zj: can you explain why choose different functions for image and tabular datasets, it should be related to the data characteristics
%% wyc: it may be better to justify why select these two functions for tabular and image datasets respectively in the appendix, and refer to it from here.
%%% xcc: Thanks for the comments. Actually, these two functions are the most intuitive choices. However, I am not sure whether this can be the reason of selecting these two functions.
%%% wyc: I think it is not a good reason for the justification. As mentioned before, it may be better to explain it in Appendix if the reason is complex; otherwise, the function selection is somewhat ad-hoc. 
%which satisfies Equation \ref{equation7}, and the other is \oursubtecb~which satisfies Equation \ref{equation70}. As for the implementation on tabular datasets, we only present one implementation simply denoted as \ourtech~which satisfies Equation \ref{equation70}. %Moreover, in tabular data generation, we make a slightly change: the linearly mixed up two data aren't  required to have the same label, which is a roughly approximation to Algorithm \ref{alg1}.
}

\revise{
\textbf{Datasets.} We use six image datasets and five tabular datasets for the evaluation. For image datasets, we use BloodMNIST~\citep{ACEVEDO2020105474}, OctMNIST~\citep{kermany2018identifying}, TissueMNIST~\citep{woloshuk2021situ}, RetinaMNIST~\citep{RetiDeep}, SVHN~\citep{svhn} and FashionMNIST~\citep{abs-1708-07747}.
%where the BloodMNIST and RetinaMNIST images are reshaped into ($28 \times 28 \times 3$) based on the pipeline provided by MedMNIST~\citep{yang2021medmnist2}. 
%
We create different domains via the dataset rotation \citep{nguyen2021domain}.
%, which is practical in realistic scenarios. 
%For example, in medical data analysis, it is common to have different domains due to various scanners~\citep{guan2021domain}. 
%
%%% wyc: please add references for these datasets.
For tabular datasets, we utilize Insect~\citep{DBLP:journals/datamine/SouzaRMB20}, Adult~\citep{DBLP:conf/kdd/Kohavi96}, Avila~\citep{DBLP:conf/iciap/StefanoFMF11}, Dry Bean~\citep{DBLP:journals/cea/KokluO20}, and Electrical Grid Stability~\citep{DBLP:conf/smartgridcomm/ArzamasovBJ18} datasets. \textcolor{black}{For the Insect dataset, we utilize sub-datasets that already contain distribution shifts, namely Insects$\_$Abr (denoted InsectsA), Insects$\_$Incr (denoted InsectsI), Insects$\_$IncrGrd (denoted InsectsIG).}
%%%jq: The Insect dataset contains more than four sub-dataset.
%it already contains four distribution-shifted sub-datasets, namely Insects$\_$Abr (denoted InsectsA), Insects$\_$Incr (denoted InsectsI), Insects$\_$IncrGrd (denoted InsectsIG) and Insects$\_$IncrRecr (denoted InsectsIR), 
%and we use two of them each time in the experiments. 
%%% (done) zj: we use two of them each time in the experiments --> please check
For the other four datasets, we divide each of them into two subsets to create distribution shifts. 
The details are explained in Appendix A.
%%% wyc: may put some of the dataset descriptions in Appendix

%we either use existing datasets which already contain distribution shifts or manually create datasets with different distributions by dataset division. To be specific, our experiments are first executed on the Insects dataset \citep{DBLP:journals/datamine/SouzaRMB20} which consists of four sub-datasets: Insects$\_$Abr (denoted InsectsA), Insects$\_$Incr (denoted InsectsI), Insects$\_$IncrGrd (denoted InsectsIG) and Insects$\_$IncrRecr (denoted InsectsIR). All these four datasets embrace different distributions, and we use either two of them to build distribution shift datasets. Furthermore, our experiments are also implemented on Adult, Avila, Dry Bean and Electrical Grid Stability Simulated Datasets. For these four datasets, we divide each one of them into two subsets to create distribution shifts. The specific division details are listed in the supplementary material.

\textbf{Baselines.} For image datasets, we compare \ourtech~with four state-of-the-art CGMs, namely, ProjGAN \citep{miyato2018cgans}, ContraGAN \citep{kang2020contragan}, ReACGAN \citep{kang2021rebooting}, and ADCGAN~\citep{pmlr-v162-hou22a}. 
We adopt the studioGAN library to implement these baseline models. 
The ProjGAN's architecture is based on ResNet, and other baselines' architectures are based on bigGAN. 
For tabular datasets, we compare \ourtech{} with the two conditional tabular dataset generation methods: CTGAN and TVAE \citep{DBLP:conf/nips/XuSCV19}, where we adopt the SDV library to implement them with the defaulted structures stated in the library.
%We utilize the hinge version adversarial loss for all the baseline methods. Regarding the hyper-parameters, we chose the best group of hyperparameters illustrated in each baseline's original paper. We set the number of discriminator update per generator update as 5 and the total generator iteration update as 20,000. 
%All the methods follow the same training and generation pipeline, where the model is first trained in a source domain and then used to generate data in the target domain. 
%Notably, the source and target domains are created by the same dataset because the source and target domains should share similar data but with different distributions. 
}

\textbf{Metric.} 
\textcolor{black}{For image datasets,} we measure the effectiveness of generated data in the target domain by the downstream classification tasks. 
The generated data is considered effective if it can improve the classifier's performance.
\ignore{
In the downstream tasks, 
%i.e., training the classifier in the target domain, 
we manually reduce the number of rotated training data such that the impact of the generated data can be amplified, i.e., for better evaluating the generated data. 
Specifically, the target domain is created from the source domain by rotating the dataset, which consists of training, validation, and testing datasets. 
The rotated validation and testing datasets are still used as validation and testing datasets in the target domain, but we only keep 200 samples per label in the rotated training dataset. 
In this way, the influence of generated data is enlarged when it is included in training the classifier of the downstream task.
%%% (done) zj: this sentence is not clear: the sentence above mention validation and testing are used in the target domain, but this sentence mentions training data 
%%%xcc: Thanks for the comment. We said influence is enlarged because we only keep 200 samples per label in the rotated training dataset.
}
For tabular datasets, we measure the generated data quality by the benchmark provided by \citep{DBLP:conf/nips/XuSCV19}.
%, which measures machine learning efficacy. 
We train regressor/classifier models on the generated data and test the models on the target domain. 
\ignore{
The rationale behind such design is that if the regressor/classifier model works well in the target domain, the regressor/classifier model suits the distribution of the target domain, i.e., the generated data satisfies the target domain distribution. 
We generate 100 samples to train each classifier and regressor. 
}
%We adopt two types of regressors and two types of classifiers. 
%For regressors, the one is
We use Linear Regression (LR) and Multilayer Perceptron (MLP) regression as regressors, and Decision Tree (DT) and MLP classification as classifiers. 
%. For classifiers, the one is Decision Tree, and the other one is MLP classification. 
We utilize the F1 score and $R^2$ to evaluate the classifiers and regressors, respectively. 
%%% wyc: I am not sure if R^2 is a well-known metric, would it be better to give the definition of R^2 here or in appendix
%%% xcc: Thanks for the comment. I think R^2 is a widely used metric for regression models. Given the papers I read, I think maybe we don't need to give the definition of R^2. 
More details on the experimental setup are provided in Appendix B.

%\textbf{Environment.} 

\begin{table*}[b]
\caption{Classification accuracy}
%\scriptsize
%\scriptsize
  \centering
  \label{table:classification}
  \setlength\tabcolsep{2pt}
  \begin{tabular}{lcccccc}
  %\hline 
  \toprule[1pt]
  {Methods} & BloodMNIST & OctMNIST & TissueMNIST & RetinaMNIST &  SVHN  &  FashionMNIST
  \\
  %\hline
  \midrule[0.5pt]
  Original & 0.63 $\pm$ 0.01 & 0.33 $\pm$ 0.03 & 0.38 $\pm$ 0.03 & 0.44 $\pm$ 0.03 & 0.47 $\pm$ 0.05 & 0.41 $\pm$ 0.03
  \\
  ProjGAN & 0.67 $\pm$ 0.02 & 0.32 $\pm$ 0.02 & 0.35 $\pm$ 0.03 & 0.40 $\pm$ 0.04 & 0.49 $\pm$ 0.02 & 0.42 $\pm$ 0.01
  \\ 
  ContraGAN & 0.68 $\pm$ 0.02 & 0.27 $\pm$ 0.03 &  0.39 $\pm$ 0.02 & 0.41 $\pm$ 0.04 & 0.45 $\pm$ 0.01 & 0.40 $\pm$ 0.02 
  \\
  ReACGAN & 0.65 $\pm$ 0.03 & 0.31 $\pm$ 0.03 & 0.40 $\pm$ 0.02 & 0.42 $\pm$ 0.03 &0.52 $\pm$ 0.03 & 0.45 $\pm$ 0.03 
  \\
  ADCGAN & 0.68 $\pm$ 0.02 & 0.34 $\pm$ 0.04 & 0.42 $\pm$ 0.03 & 0.44 $\pm$ 0.03 & 0.51 $\pm$ 0.02 & 0.43 $\pm$ 0.03
  \\
\oursubtecb~ & 0.68 $\pm$ 0.01 & 0.41 $\pm$ 0.03 & 0.42 $\pm$ 0.02 & 0.47 $\pm$ 0.03 & 0.52 $\pm$ 0.02 & 0.49 $\pm$ 0.03 
  \\
  \oursubteca~ & \textbf{0.75 $\pm$ 0.01}  & \textbf{0.43 $\pm$ 0.03} & \textbf{0.44 $\pm$ 0.03} & \textbf{0.50 $\pm$ 0.03} & \textbf{0.55 $\pm$ 0.01} & \textbf{0.53 $\pm$ 0.02} 
  \\
  %\hline
  \bottomrule[1pt] 
  \end{tabular}
\end{table*}

\ignore{
\begin{table*}[b]
\caption{Classification accuracy}
%\scriptsize
%\scriptsize
  \centering
  \setlength\tabcolsep{2pt}
  \label{table:classification}
  \begin{tabular}{lccccccc}
  %\hline 
  \toprule[1pt]
  {Datasets} & Original & \oursubteca~ & \oursubtecb~ & ProjGAN &  ContraGAN & ReACGAN & ADCGAN
  \\
  %\hline
  \midrule[0.5pt]
  BloodMNIST & 0.63 $\pm$ 0.010 & \textbf{0.75 $\pm$ 0.010} & 0.68 $\pm$ 0.013  & 0.67 $\pm$ 0.016 & 0.68 $\pm$ 0.022 & 0.65 $\pm$ 0.031 & \textcolor{black}{0.68 $\pm$ 0.024}
  \\
  OctMNIST & 0.33 $\pm$ 0.029 & \textbf{0.43 $\pm$ 0.025} & 0.41 $\pm$ 0.030 & 0.32 $\pm$ 0.024 & 0.27 $\pm$ 0.033 & 0.31 $\pm$ 0.028 & 0.34 $\pm$ 0.035
  \\
  TissueMNIST & 0.38 $\pm$ 0.025 & \textbf{0.44 $\pm$ 0.026} & 0.42 $\pm$ 0.021 & 0.35 $\pm$ 0.027 & 0.39 $\pm$ 0.023 & 0.40 $\pm$ 0.024 & 0.42 $\pm$ 0.033
  \\
  RetinaMNIST & 0.44 $\pm$ 0.032 & \textbf{0.50 $\pm$ 0.029} & 0.47 $\pm$ 0.031 & 0.40 $\pm$ 0.039 & 0.41 $\pm$ 0.037 & 0.42 $\pm$ 0.032 & 0.44 $\pm$ 0.031
  \\
  SVHN & 0.47 $\pm$ 0.045 & \textbf{0.55 $\pm$ 0.009} &0.52 $\pm$ 0.019 & 0.49 $\pm$ 0.018 & 0.45 $\pm$ 0.014 & 0.52 $\pm$ 0.028 & 0.51 $\pm$ 0.015
  \\
  FashionMNIST & 0.41 $\pm$ 0.026 & \textbf{0.53 $\pm$ 0.015} & 0.49 $\pm$ 0.031 & 0.42 $\pm$ 0.014 & 0.40 $\pm$ 0.019 & 0.45 $\pm$ 0.032 & 0.43 $\pm$ 0.027
  \\
  %\hline
  \bottomrule[1pt] 
  \end{tabular}
\end{table*}
}

\begin{figure*}[t]
	\centering
    \includegraphics[scale=0.33]{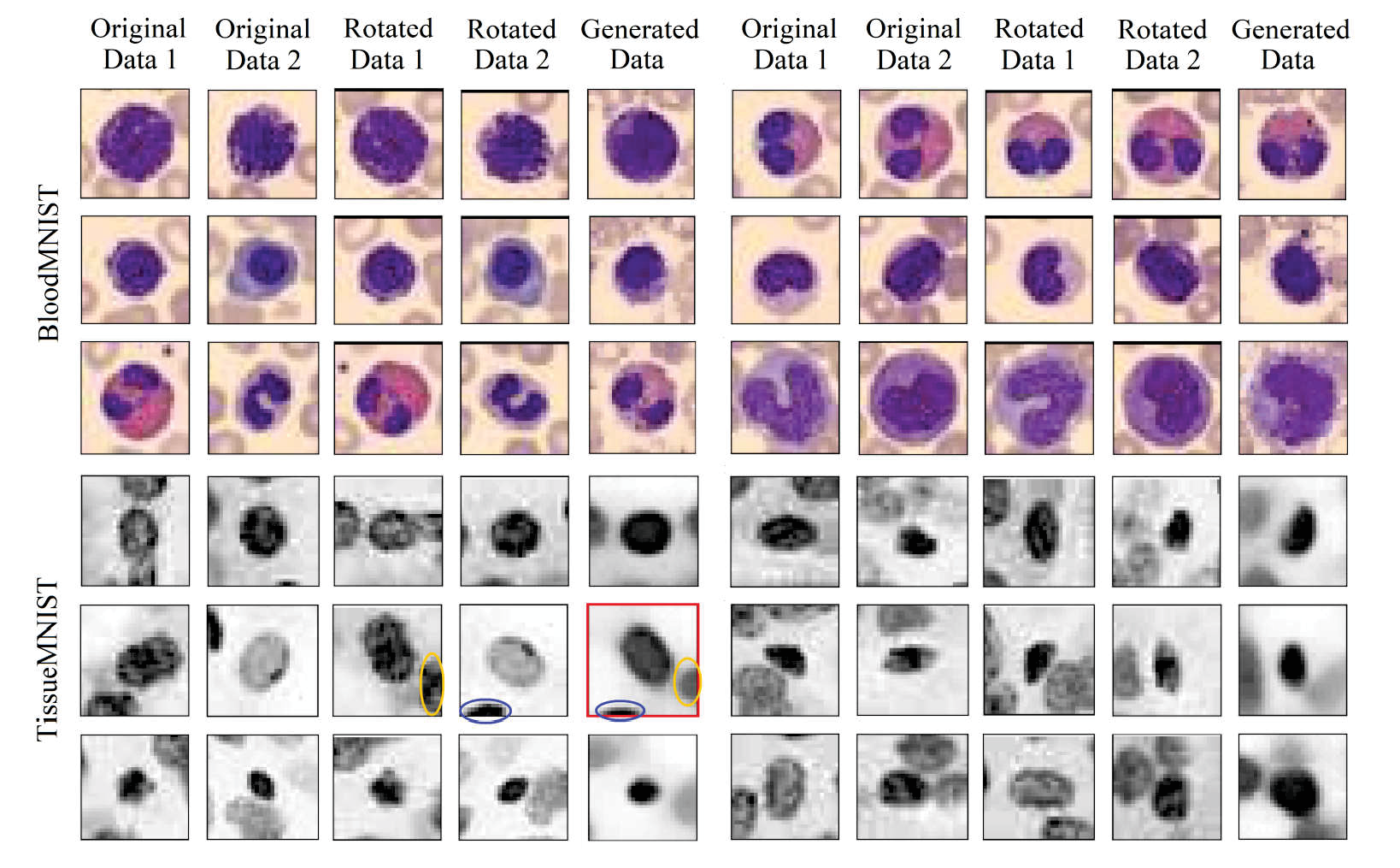}
    \vspace{-1mm}
    \caption{\revise{Visualization results of the RetinaMNIST and TissueMNIST datasets.} The Original Data 1 and Original Data 2 are in the source domain for training the \ourtech. The Rotated Data 1 and Rotated Data 2 are in the target domain.
    %, obtained by rotating the corresponding original data 90 degrees.
    %The Generated Data is the linear combination of Rotated Data 1 and Rotated Data 2 obtained in the data generation stage.
    \revise{The Generated Data is obtained in the data generation stage.}
    } 
	\vspace{-6mm}
    \label{fig2}
\end{figure*}

\vspace{-5mm}
\subsection{Evaluation on Image Datasets} \label{subsec:evaluation1}

%%% wyc: would it be better to move some of the visualization results back? the current experiments only contain three tables.
%%% (done) zj: reduce some theorem and move some images here
\textbf{Downstream task evaluation.}
In this paper, we conduct classification as the downstream tasks, and the classifier is ResNet50~\citep{DBLP:conf/cvpr/HeZRS16}. 
The target domain is created by rotating the original dataset for 90 degrees. 
For each generative paradigm, we generate 200 images per label, the same number as that in the rotated training dataset. 
Table~\ref{table:classification} summarizes the classification results  over three runs evaluated on five generative paradigms. 
Note that the `Original' column refers to the classification accuracy directly on the rotated training dataset (without generated data).
The accuracy is low due to the limited number of training images.

There are three main observations from Table~\ref{table:classification}. 
First, the new data generated by the baselines, i.e., ProjGAN, ContraGAN, RcACGAN, and ADCGAN, is ineffective for downstream tasks in the target domain. 
Specifically, except for the BloodMNIST dataset, the performances of the classifiers trained with generated data on the other five datasets are similar to or even lower than those trained without generated data. 
This is because the baselines are trained in the source domain, i.e., they learn the source domain's distributions for generating new data. 
When the distribution shifts exist between the source domain and target domain, the generated new data is not helpful to the classification tasks in the target domain. 
For the BloodMNIST dataset, the \textcolor{black}{four} baselines can improve the classifiers' performances because the key features of the label information are more robust. 
%%% zj: can explain more about the "robust"?
Note that BloodMNIST results in the highest `Original' classification accuracy; thus, data rotation has little impact on the key features. 
Second, the data generated by \ourtech{} greatly improves the classification accuracy of the downstream tasks w.r.t. the \textcolor{black}{six} datasets. 
Even for the BloodMNIST dataset, where all the generative paradigms can improve the accuracy, \ourtech{} still outperforms the baselines. 
The reason is that \ourtech{} essentially learns a latent space during training and incorporates the target domain's distribution into the data generation process. 
%%% zj: target domain's distribution: do you mean the label information?
Therefore, it can generate data that follows the target domain's distribution, leading to better performance. Third, the performance of the \ourtech{}'s \oursubteca~realization is generally better than that of the \oursubtecb~realization. 
For example, \oursubteca~achieves about $2\% \sim 6 \%$ accuracy improvement compared to \oursubtecb. The reason is that \oursubteca~employs more orthogonal bases to construct the latent representations, and thus, it has a higher capability to provide high-quality latent representations for generating better data. 
%Next, we evaluate a downstream semi-supervised classification task w.r.t. the BloodMNIST dataset, where we further generate 100 images without labels beside the 200 generated labeled images. Then, we adopt a semi-supervised classifier for the training. We observe that NeurSpec (i.e., 0.74 $\pm$ 0.021) still achieves the best performances compared with all baselines, i.e., 0.69 $\pm$ 0.018, 0.65 $\pm$ 0.021, 0.64 $\pm$ 0.021, 0.66 $\pm$ 0.052, for ProjGAN, ContraGAN, ReACCGAN, ADCGAN, respectively. 

\textbf{Visualization results.}
Figure~\ref{fig2} shows the visualization results of data generation in \oursubteca~as it provides better accuracy on the downstream classification tasks. % compared to \oursubtecb. 
Figure~\ref{fig2} consists of three parts: original data, rotated data, and generated data, where the original data is used for \oursubteca~training, the rotated data is the input of generation, and the generated data is the output. 
Moreover, the original data is in the source domain, while the rotated and generated data are in the target domain. 
We can observe from Figure~\ref{fig2} that, even though \oursubteca~is not trained in the target domain, the generated data is similar to but different from the rotated data, demonstrating that \oursubteca~is able to generate acceptable data by the linear combination of rotated data. 
For example, the third generated image of TissueMNIST, i.e., the fifth image in the fifth row highlighted in the red rectangle, consists of two circles, where the yellow circle inherits from rotated 1, and the blue circle inherits from rotated data 2. 
%%% (done) zj: please check the above
%
\textcolor{black}{More visualization results of the generated data are provided in Appendix C.}
%%% (done) zj: what is the specific section number of the appendix?
%%% wyc: better to link to specific Appenaaaion.

%\textbf{Further evaluation on distribution shift problems.}
%Finally, we construct a more challenging target domain of the BloodMNIST dataset to further evaluate the effectiveness of \ourtech{} on distribution shift problems. Specifically, we adopt an angle-hybrid target domain, i.e., we split each data sample into four pieces, and rotate them by 0, 90, 180, and 270 degrees, respectively. The experimental results demonstrate that our NeurSpec method still achieves the highest accuracy, i.e., 0.73 $\pm$ 0.20, confirming its superior performance. While the accuracies of the ProjGAN, ContraGAN, and ReACCGAN baselines are: 0.65 $\pm$ 0.015, 0.66 $\pm$ 0.012, and 0.64 $\pm$ 0.017, respectively. The main reason is that different rotated degrees have little impact on the zero-one similarity matrix and thus have little impact on the special latent presentation space. Therefore, NeurSpec can generate useful data to improve the classifier's accuracy. 

\begin{table*}[t]
\caption{Benchmark results ($R^2$ for regressors) on tabular datasets}
\vspace{-1mm}
%\scriptsize
  \centering
  \setlength\tabcolsep{4.5pt}
  \label{table:regression}
  \begin{tabular}{llcccccc}
  %\hline 
  \toprule[1pt]
  \multirow{2}{*}{\textcolor{black}{Dataset}} & \multirow{2}{*}{\textcolor{black}{Source $\rightarrow$ Target}} & \multicolumn{2}{c}{\ourtech{}}  & \multicolumn{2}{c}{CTGAN} & \multicolumn{2}{c}{TVAE} 
  \\
  & & LR & MLP & LR & MLP 
  & LR & MLP 
  \\
  %\hline
  \midrule[0.5pt]
  \multirow{12}{*}{\textcolor{black}{Insect}} & InsectA $\to$ InsectI & \textbf{0.5028} & 0.0422 & -0.8304 & -0.2169 & 0.4923 & -0.2887
  \\
  & InsectA $\to$ InsectIG & \textbf{0.4620} & -0.1132 & -1.3399 & -0.4884 & -0.0043 & -0.9879 
  \\
  & InsectA $\to$ InsectIR & \textbf{0.4040} & -0.0600 & -0.8327 & -0.2939 & 0.2641 & -0.2887
  \\
  & InsectI $\to$ InsectA & \textbf{0.5055} & -0.0525 & -0.0950 & -0.4944 & 0.4964 & -0.0928
  \\
  & InsectI $\to$ InsectIG & \textbf{0.5108} & -0.0984 & -0.9219 & -0.9173 & 0.2754 & -0.1894
  \\
  & InsectI $\to$ InsectIR & \textbf{0.5153} & -0.1894 & -0.2775 & -0.5936 & 0.4581 & -0.1164
  \\
  & InsectIG $\to$ InsectA & \textbf{0.5723} & -0.0837 & -0.3835 & -0.5262 & 0.2108 & -0.3124
  \\
  & InsectIG $\to$ InsectI & \textbf{0.2833} & -0.1305 & -0.4712 & -0.4354 & 0.2261 & -0.2591
  \\
  & InsectIG $\to$ InsectIR & \textbf{0.3754} & 0.0116 & -0.4423 & -0.4035 & 0.1576 & -0.0977
  \\
  & InsectIR $\to$ InsectA & \textbf{0.5229} & -0.0694 & 0.2952 & -0.4751 & 0.2813 & -0.0267
  \\
  & InsectIR $\to$ InsectI & \textbf{0.6332} & 0.0477 & 0.3664 & -0.4710 & 0.2457 & 0.0253
  \\
  & InsectIR $\to$ InsectIG & \textbf{0.4970} & -0.0905 & 0.3193 &  -0.4689 & 0.1291 & -0.0010
  \\
  %\hline
  \midrule[0.5pt]
  \multirow{2}{*}{\textcolor{black}{Electrial}} & Electrical0 $\to$ Electrical1 & \textbf{0.0987} & -0.4460 & -0.0518 & -1.4251 & -0.6375 & -8.0482
  \\
  & Electrical1 $\to$ Electrical0 & \textbf{-0.1539} & -1.5070 & -1.1412 & -2.4765 & -1.8348 & -7.2222 \\
  %\hline
  \bottomrule[1pt] 
  \end{tabular}
\end{table*}

\begin{table}[t]
\caption{Benchmark results (F1 score for classifiers) on tabular datasets}
%\scriptsize
  \centering
  \label{table:classifier}
  \begin{tabular}{llcccccc}
  %\hline 
  \toprule[1pt]
   \multirow{2}{*}{\textcolor{black}{Dataset}} & \multirow{2}{*}{\textcolor{black}{Source $\rightarrow$ Target}} & \multicolumn{2}{c}{\ourtech{}}  & \multicolumn{2}{c}{CTGAN} & \multicolumn{2}{c}{TVAE} 
  \\
  & & DT & MLP & DT & MLP 
  & DT & MLP 
  \\
  %\hline
  \midrule[0.5pt]
  \multirow{2}{*}{\textcolor{black}{Adult}} & Adult0 $\to$ Adult1 & 0.5560 & \textbf{0.5723} & 0.3888 & 0.5707 & 0.4291 & 0.5711
  \\
  & Adult1 $\to$ Adult0 & 0.6834 & \textbf{0.8810} & 0.4348 & 0.8754 & 0.2409 & 0.1181 
  \\
  %\hline
  \midrule[0.5pt]
  \multirow{2}{*}{\textcolor{black}{DryBean}} & DryBean0 $\to$ DryBean1 & 0.0717 & \textbf{0.2306} & 0.0563 & 0.0759 & 0.0699 & 0.0704
  \\
  & DryBean1 $\to$ DryBean0 & 0.2198 & \textbf{0.2313} & 0.1717 & 0.2156 & 0.0606 & 0.1487
  \\
  %\hline
  \midrule[0.5pt]
  \multirow{2}{*}{\textcolor{black}{Avila}} & Avila0 $\to$ Avila1 & \textbf{0.3016} & 0.2975 & 0.0692 & 0.1706 & 0.2714 & 0.2621
  \\
  & Avila1 $\to$ Avila0 & \textbf{0.4625} & 0.4323 & 0.0918 & 0.0713 & 0.3459 & 0.3844
  \\
  %\hline
  \bottomrule[1pt]  
  \end{tabular}
\end{table}

\vspace{-2mm}
\subsection{\textcolor{black}{Evaluation on Tabular Datasets}}\label{subsec:evaluation2}

\revise{
%\textbf{Comparison with baselines.}
Tables \ref{table:regression} and \ref{table:classifier} summarize the comparison results for regressors and classifiers, respectively.
%The comparison results are presented in , where Table \ref{table:regression} consists of $R^2$ of regressors and Table \ref{table:classifier} is about the F1 score of classifiers. 
%We can reach following three conclusions. 
There are two observations. 
First, \ourtech{} outperforms CTGAN and TVAE in most cases. This is because \ourtech{} is designed to learn a universal linear space and transmission across the target and source domains. However, CTGAN and TVAE are designed to learn the distributions of the source domain, and thus, cannot perform well in the target domain.
\ignore{
Second, linear regressor usually performs better than MLP regressor, while MLP classifier often performs better than decision tree classifiers. This demonstrates the meaning of using two types of classifiers and regressors which avoids the possible error caused by the single type of regressor/classifier. 
}
%%% wyc: what is the point of the second observation? do you want to mention that two regressors/classifiers can make the evaluation more robust? This seems not to be a good observation. Would it be possible to explain why linear regressors outperform MLP regressors, and similar for the classifiers?
%%% xcc: Maybe we can just simply remove the second observation?
%%% wyc: can, if no good explanation, but this subsection is a bit short
%%% xcc: Got it, thanks.
Second, we observe that the results are not symmetric, meaning that the performance of \ourtech{} from one dataset to another may not be the same in reverse. For example, \ourtech{} performs well from Avila0 to Avia1, %which does not mean \ourtech{} would have the same F1 
%%% ooibc: f1--> F1
%score 
but it does not perform well from Avila1 to Avila0. This lack of symmetry can be attributed to differences between the datasets.
While GMs trained on one dataset may find it easier to perform well on another, this does not necessarily guarantee the same for models trained on a different dataset.
%
%%% wyc: I suggest putting at least some of the ablation study here instead of in the appendix.
We further conduct an ablation study of the effect of the parameter $\eta$ in the loss function in Appendix D.
}

\vspace{-2mm}
\section{Conclusion}

In this paper, we propose a similarity-based generative network \ourtech{} to generate desirable samples for downstream tasks. 
%%% (done) zj: a new data generative framework --> framework or mechanism? I think you use mechanism throughout the paper, please be consistent
%%%xcc: Thanks for the comment. I think maybe using 'scheme' to describe SGN and 'mechanism' to describe the similarity mechanism is better?
The core of \ourtech{} is a novel similarity-based evaluation mechanism, which utilizes the data labels to construct a zero-one similarity matrix. 
In the training stage, instead of learning the probability distribution of the training dataset in the source domain, \ourtech{} searches for a latent space that satisfies the similarity requirement. 
In the data generation stage, \ourtech{} incorporates the needed distribution information of the representative dataset to generate the desirable samples in the target domain, mitigating the distribution shift problem. 
Extensive experiments on various real-world datasets, \textcolor{black}{including both image and tabular datasets}, demonstrate the effectiveness of \ourtech{} and its superiority over the baselines.
%%% (done) zj: the page limit is nine pages for contents
%%%xcc: Got it. I will move some contents to the supplementary material.

%%%%%%%%%%%%%%%%%%%%%%%%%%%%%%%%%%%%%%%%%%%%%%%%%%%%%%%%%%%%

\bibliographystyle{ACM-Reference-Format}
\bibliography{Ref}

@inproceedings{DBLP:conf/kdd/Kohavi96,
  author       = {Ron Kohavi},
  title        = {Scaling Up the Accuracy of Naive-Bayes Classifiers: {A} Decision-Tree
                  Hybrid},
  booktitle    = {SIGKDD},
  pages        = {202--207},
  year         = {1996}
}

@inproceedings{DBLP:conf/smartgridcomm/ArzamasovBJ18,
  author       = {Vadim Arzamasov and
                  Klemens B{\"{o}}hm and
                  Patrick Jochem},
  title        = {Towards Concise Models of Grid Stability},
  booktitle    = {SmartGridComm},
  pages        = {1--6},
  year         = {2018}
}

@article{DBLP:journals/cea/KokluO20,
  author       = {Murat Koklu and
                  Ilker Ali {\"{O}}zkan},
  title        = {Multiclass classification of dry beans using computer vision and machine
                  learning techniques},
  journal      = {Comput. Electron. Agric.},
  volume       = {174},
  pages        = {105507},
  year         = {2020}
}

@inproceedings{DBLP:conf/iciap/StefanoFMF11,
  author       = {Claudio De Stefano and
                  Francesco Fontanella and
                  Marilena Maniaci and
                  Alessandra Scotto di Freca},
  title        = {A Method for Scribe Distinction in Medieval Manuscripts Using Page
                  Layout Features},
  booktitle    = {{ICIAP}},
  volume       = {6978},
  pages        = {393--402},
  year         = {2011}
}

@article{DBLP:journals/datamine/SouzaRMB20,
  author       = {Vin{\'{\i}}cius M. A. de Souza and
                  Denis Moreira dos Reis and
                  Andr{\'{e}} Gustavo Maletzke and
                  Gustavo E. A. P. A. Batista},
  title        = {Challenges in benchmarking stream learning algorithms with real-world
                  data},
  journal      = {Data Min. Knowl. Discov.},
  volume       = {34},
  number       = {6},
  pages        = {1805--1858},
  year         = {2020}
}

@inproceedings{DBLP:conf/cvpr/FanWKYGZ21,
  author       = {Xinjie Fan and
                  Qifei Wang and
                  Junjie Ke and
                  Feng Yang and
                  Boqing Gong and
                  Mingyuan Zhou},
  title        = {Adversarially Adaptive Normalization for Single Domain Generalization},
  booktitle    = {{CVPR}},
  pages        = {8208--8217},
  year         = {2021}
}

@inproceedings{DBLP:conf/iccv/WangLQHB21,
  author       = {Zijian Wang and
                  Yadan Luo and
                  Ruihong Qiu and
                  Zi Huang and
                  Mahsa Baktashmotlagh},
  title        = {Learning to Diversify for Single Domain Generalization},
  booktitle    = {{ICCV}},
  pages        = {814--823},
  year         = {2021}
}

@inproceedings{DBLP:conf/cvpr/QiaoZP20,
  author       = {Fengchun Qiao and
                  Long Zhao and
                  Xi Peng},
  title        = {Learning to Learn Single Domain Generalization},
  booktitle    = {{CVPR}},
  pages        = {12553--12562},
  year         = {2020}
}

@inproceedings{DBLP:conf/ijcai/0001LLOQ21,
  author       = {Jindong Wang and
                  Cuiling Lan and
                  Chang Liu and
                  Yidong Ouyang and
                  Tao Qin},
  title        = {Generalizing to Unseen Domains: {A} Survey on Domain Generalization},
  booktitle    = {{IJCAI}},
  pages        = {4627--4635},
  year         = {2021}
}

@inproceedings{DBLP:conf/nips/XuSCV19,
  author       = {Lei Xu and
                  Maria Skoularidou and
                  Alfredo Cuesta{-}Infante and
                  Kalyan Veeramachaneni},
  title        = {Modeling Tabular data using Conditional {GAN}},
  booktitle    = {NeurIPS},
  pages        = {7333--7343},
  year         = {2019}
}

@InProceedings{pmlr-v162-hou22a,
  title = 	 {Conditional {GAN}s with Auxiliary Discriminative Classifier},
  author =       {Hou, Liang and Cao, Qi and Shen, Huawei and Pan, Siyuan and Li, Xiaoshuang and Cheng, Xueqi},
  booktitle = 	 {ICML},
  pages = 	 {8888--8902},
  year = 	 {2022},
  volume = 	 {162},
  month = 	 {17--23 Jul},
}

@article{ACEVEDO2020105474,
title = {A dataset of microscopic peripheral blood cell images for development of automatic recognition systems},
journal = {Data in Brief},
volume = {30},
pages = {105474},
year = {2020},
author = {Andrea Acevedo and Anna Merino and Santiago Alférez and et al.}
}

@inproceedings{shaham2019singan,
  title={Singan: Learning a generative model from a single natural image},
  author={Shaham, Tamar Rott and Dekel, Tali and Michaeli, Tomer},
  booktitle={ICCV},
  pages={4570--4580},
  year={2019}
}

@inproceedings{yi2017dualgan,
  title={Dualgan: Unsupervised dual learning for image-to-image translation},
  author={Yi, Zili and Zhang, Hao and Tan, Ping and Gong, Minglun},
  booktitle={ICCV},
  pages={2849--2857},
  year={2017}
}

@article{deldjoo2021survey,
  title={A survey on adversarial recommender systems: from attack/defense strategies to generative adversarial networks},
  author={Deldjoo, Yashar and Noia, Tommaso Di and Merra, Felice Antonio},
  journal={ACM Computing Surveys},
  volume={54},
  number={2},
  pages={1--38},
  year={2021},
}

@article{NIPS2014_5ca3e9b1,
  title={Generative adversarial nets},
  author={Goodfellow, Ian and Pouget-Abadie, Jean and Mirza, Mehdi and Xu, Bing and Warde-Farley, David and Ozair, Sherjil and Courville, Aaron and Bengio, Yoshua},
  journal={NIPS},
  volume={27},
  year={2014}
}

@inproceedings{DBLP:journals/corr/KingmaW13,
  author    = {Diederik P. Kingma and
               Max Welling},
  title     = {Auto-Encoding Variational Bayes},
  booktitle = {{ICLR}},
  year      = {2014},
}

@article{mirza2014conditional,
  title={Conditional generative adversarial nets},
  author={Mirza, Mehdi and Osindero, Simon},
  journal={arXiv preprint arXiv:1411.1784},
  year={2014}
}

@article{sohn2015learning,
  title={Learning structured output representation using deep conditional generative models},
  author={Sohn, Kihyuk and Lee, Honglak and Yan, Xinchen},
  journal={NIPS},
  volume={28},
  pages={3483--3491},
  year={2015}
}

@book{quinonero2008dataset,
  title={Dataset shift in machine learning},
  author={Qui{\~n}onero-Candela, Joaquin and Sugiyama, Masashi and Schwaighofer, Anton and Lawrence, Neil D},
  year={2008},
}

@article{kermany2018identifying,
  title={Identifying medical diagnoses and treatable diseases by image-based deep learning},
  author={Kermany, Daniel S and Goldbaum, Michael and Cai, Wenjia and Valentim, Carolina CS and Liang, Huiying and Baxter, Sally L and McKeown, Alex and Yang, Ge and Wu, Xiaokang and Yan, Fangbing and others},
  journal={Cell},
  volume={172},
  number={5},
  pages={1122--1131},
  year={2018},
}

@article{woloshuk2021situ,
  title={In situ classification of cell types in human kidney tissue using 3D nuclear staining},
  author={Woloshuk, Andre and Khochare, Suraj and Almulhim, Aljohara F and McNutt, Andrew T and Dean, Dawson and Barwinska, Daria and Ferkowicz, Michael J and Eadon, Michael T and Kelly, Katherine J and Dunn, Kenneth W and others},
  journal={Cytometry Part A},
  volume={99},
  number={7},
  pages={707--721},
  year={2021},
}

@misc{RetiDeep,
    author = "{DeepDRiD}",
    title = {The 2nd diabetic retinopathy – grading and image quality estimation challenge},
    year = {2020},
}

@article{kang2020contragan,
  title={Contragan: Contrastive learning for conditional image generation},
  author={Kang, Minguk and Park, Jaesik},
  journal={NeurIPS},
  year={2020}
}

@article{kang2021rebooting,
  title={Rebooting ACGAN: Auxiliary Classifier GANs with Stable Training},
  author={Kang, Minguk and Shim, Woohyeon and Cho, Minsu and Park, Jaesik},
  journal={NeurIPS},
  volume={34},
  year={2021}
}

@inproceedings{miyato2018cgans,
  author    = {Takeru Miyato and
               Masanori Koyama},
  title     = {cGANs with Projection Discriminator},
  booktitle = {{ICLR}},
  year      = {2018}
}

@article{nguyen2021domain,
  title={Domain invariant representation learning with domain density transformations},
  author={Nguyen, A Tuan and Tran, Toan and Gal, Yarin and Baydin, Atilim Gunes},
  journal={NeurIPS},
  volume={34},
  year={2021}
}

@inproceedings{DBLP:conf/cvpr/HeZRS16,
  author    = {Kaiming He and
               Xiangyu Zhang and
               Shaoqing Ren and
               Jian Sun},
  title     = {Deep Residual Learning for Image Recognition},
  booktitle = {{CVPR}},
  pages     = {770--778},
  year      = {2016}
}

@article{abs-1708-07747,
  author       = {Han Xiao and
                  Kashif Rasul and
                  Roland Vollgraf},
  title        = {Fashion-MNIST: a Novel Image Dataset for Benchmarking Machine Learning
                  Algorithms},
  journal      = {CoRR},
  volume       = {abs/1708.07747},
  year         = {2017}
}

@inproceedings{svhn,
title	= {Reading Digits in Natural Images with Unsupervised Feature Learning},
author	= {Yuval Netzer and Tao Wang and Adam Coates and Alessandro Bissacco and Bo Wu and Andrew Y. Ng},
year	= {2011}
}

@article{DBLP:journals/csur/YangZSHXZZCY24,
  author       = {Ling Yang and
                  Zhilong Zhang and
                  Yang Song and
                  Shenda Hong and
                  Runsheng Xu and
                  Yue Zhao and
                  Wentao Zhang and
                  Bin Cui and
                  Ming{-}Hsuan Yang},
  title        = {Diffusion Models: {A} Comprehensive Survey of Methods and Applications},
  journal      = {{ACM} Comput. Surv.},
  volume       = {56},
  number       = {4},
  pages        = {105:1--105:39},
  year         = {2024},
  url          = {https://doi.org/10.1145/3626235},
  doi          = {10.1145/3626235},
  bibsource    = {dblp computer science bibliography, https://dblp.org}
}

@inproceedings{DBLP:conf/nips/HoJA20,
  author       = {Jonathan Ho and
                  Ajay Jain and
                  Pieter Abbeel},
  editor       = {Hugo Larochelle and
                  Marc'Aurelio Ranzato and
                  Raia Hadsell and
                  Maria{-}Florina Balcan and
                  Hsuan{-}Tien Lin},
  title        = {Denoising Diffusion Probabilistic Models},
  booktitle    = {Advances in Neural Information Processing Systems 33: Annual Conference
                  on Neural Information Processing Systems 2020, NeurIPS 2020, December
                  6-12, 2020, virtual},
  year         = {2020},
  url          = {https://proceedings.neurips.cc/paper/2020/hash/4c5bcfec8584af0d967f1ab10179ca4b-Abstract.html},
  bibsource    = {dblp computer science bibliography, https://dblp.org}
}

@inproceedings{DBLP:conf/iccv/ChouBH23,
  author       = {Gene Chou and
                  Yuval Bahat and
                  Felix Heide},
  title        = {Diffusion-SDF: Conditional Generative Modeling of Signed Distance
                  Functions},
  booktitle    = {{IEEE/CVF} International Conference on Computer Vision, {ICCV} 2023,
                  Paris, France, October 1-6, 2023},
  pages        = {2262--2272},
  publisher    = {{IEEE}},
  year         = {2023},
  url          = {https://doi.org/10.1109/ICCV51070.2023.00215},
  doi          = {10.1109/ICCV51070.2023.00215},
  bibsource    = {dblp computer science bibliography, https://dblp.org}
}

@article{zhu2026generative,
  title={Generative anomaly detection: a comprehensive review of modeling principles, advances, and future opportunities},
  author={Zhu, Jiaqi and Fan, Yunfeng and Han, Geng and Shi, Xiang and Deng, Fang and Chen, Jie},
  journal={Artificial Intelligence Review},
  year={2026},
  publisher={Springer}
}

@article{zhu2023meter,
  title={METER: A Dynamic Concept Adaptation Framework for Online Anomaly Detection},
  author={Zhu, Jiaqi and Cai, Shaofeng and Deng, Fang and Ooi, Beng Chin and Zhang, Wenqiao},
  journal={Proceedings of the VLDB Endowment},
  volume={17},
  number={4},
  pages={794--807},
  year={2023},
  publisher={VLDB Endowment}
}

@inproceedings{zhu2025context,
  title={In-Context Adaptation to Concept Drift for Learned Database Operations},
  author={Zhu, Jiaqi and Cai, Shaofeng and Shen, Yanyan and Chen, Gang and Deng, Fang and Ooi, Beng Chin},
  booktitle={International Conference on Machine Learning},
  pages={79699--79726},
  year={2025},
  organization={PMLR}
}

@inproceedings{DBLP:conf/icml/HouCSPLC22,
  author       = {Liang Hou and
                  Qi Cao and
                  Huawei Shen and
                  Siyuan Pan and
                  Xiaoshuang Li and
                  Xueqi Cheng},
  editor       = {Kamalika Chaudhuri and
                  Stefanie Jegelka and
                  Le Song and
                  Csaba Szepesv{\'{a}}ri and
                  Gang Niu and
                  Sivan Sabato},
  title        = {Conditional GANs with Auxiliary Discriminative Classifier},
  booktitle    = {International Conference on Machine Learning, {ICML} 2022, 17-23 July
                  2022, Baltimore, Maryland, {USA}},
  series       = {Proceedings of Machine Learning Research},
  volume       = {162},
  pages        = {8888--8902},
  publisher    = {{PMLR}},
  year         = {2022},
  url          = {https://proceedings.mlr.press/v162/hou22a.html},
  bibsource    = {dblp computer science bibliography, https://dblp.org}
}

@inproceedings{DBLP:conf/iccv/PeeblesX23,
  author       = {William Peebles and
                  Saining Xie},
  title        = {Scalable Diffusion Models with Transformers},
  booktitle    = {{IEEE/CVF} International Conference on Computer Vision, {ICCV} 2023,
                  Paris, France, October 1-6, 2023},
  pages        = {4172--4182},
  publisher    = {{IEEE}},
  year         = {2023},
  url          = {https://doi.org/10.1109/ICCV51070.2023.00387},
  doi          = {10.1109/ICCV51070.2023.00387},
  bibsource    = {dblp computer science bibliography, https://dblp.org}
}

@inproceedings{DBLP:conf/iclr/LiHCLS0025,
  author       = {Xirui Li and
                  Charles Herrmann and
                  Kelvin C. K. Chan and
                  Yinxiao Li and
                  Deqing Sun and
                  Chao Ma and
                  Ming{-}Hsuan Yang},
  title        = {A Simple Approach to Unifying Diffusion-based Conditional Generation},
  booktitle    = {The Thirteenth International Conference on Learning Representations,
                  {ICLR} 2025, Singapore, April 24-28, 2025},
  publisher    = {OpenReview.net},
  year         = {2025},
  url          = {https://openreview.net/forum?id=tAGmxz1TUi},
  bibsource    = {dblp computer science bibliography, https://dblp.org}
}

@inproceedings{DBLP:conf/iclr/SadatKHW25,
  author       = {Seyedmorteza Sadat and
                  Manuel Kansy and
                  Otmar Hilliges and
                  Romann M. Weber},
  title        = {No Training, No Problem: Rethinking Classifier-Free Guidance for Diffusion
                  Models},
  booktitle    = {The Thirteenth International Conference on Learning Representations,
                  {ICLR} 2025, Singapore, April 24-28, 2025},
  publisher    = {OpenReview.net},
  year         = {2025},
  url          = {https://openreview.net/forum?id=b3CzCCCILJ},
  bibsource    = {dblp computer science bibliography, https://dblp.org}
}

@inproceedings{DBLP:conf/cvpr/WangGBHK020,
  author       = {Yaxing Wang and
                  Abel Gonzalez{-}Garcia and
                  David Berga and
                  Luis Herranz and
                  Fahad Shahbaz Khan and
                  Joost van de Weijer},
  title        = {MineGAN: Effective Knowledge Transfer From GANs to Target Domains
                  With Few Images},
  booktitle    = {2020 {IEEE/CVF} Conference on Computer Vision and Pattern Recognition,
                  {CVPR} 2020, Seattle, WA, USA, June 13-19, 2020},
  pages        = {9329--9338},
  publisher    = {Computer Vision Foundation / {IEEE}},
  year         = {2020},
  url          = {https://openaccess.thecvf.com/content\_CVPR\_2020/html/Wang\_MineGAN\_Effective\_Knowledge\_Transfer\_From\_GANs\_to\_Target\_Domains\_With\_CVPR\_2020\_paper.html},
  doi          = {10.1109/CVPR42600.2020.00935},
  bibsource    = {dblp computer science bibliography, https://dblp.org}
}

@inproceedings{nguyen2026targeted,
  title     = {Do We Need All the Synthetic Data? Targeted Image Augmentation via Diffusion Models},
  author    = {Nguyen, Dang and Li, Jiping and Zheng, Jinghao and Mirzasoleiman, Baharan},
  booktitle = {International Conference on Learning Representations},
  year      = {2026}
}

@inproceedings{DBLP:conf/icml/VermaLBNMLB19,
  author       = {Vikas Verma and
                  Alex Lamb and
                  Christopher Beckham and
                  Amir Najafi and
                  Ioannis Mitliagkas and
                  David Lopez{-}Paz and
                  Yoshua Bengio},
  editor       = {Kamalika Chaudhuri and
                  Ruslan Salakhutdinov},
  title        = {Manifold Mixup: Better Representations by Interpolating Hidden States},
  booktitle    = {Proceedings of the 36th International Conference on Machine Learning,
                  {ICML} 2019, 9-15 June 2019, Long Beach, California, {USA}},
  series       = {Proceedings of Machine Learning Research},
  volume       = {97},
  pages        = {6438--6447},
  publisher    = {{PMLR}},
  year         = {2019},
  url          = {http://proceedings.mlr.press/v97/verma19a.html},
  bibsource    = {dblp computer science bibliography, https://dblp.org}
}

@inproceedings{DBLP:conf/iclr/ZhangCDL18,
  author       = {Hongyi Zhang and
                  Moustapha Ciss{\'{e}} and
                  Yann N. Dauphin and
                  David Lopez{-}Paz},
  title        = {mixup: Beyond Empirical Risk Minimization},
  booktitle    = {6th International Conference on Learning Representations, {ICLR} 2018,
                  Vancouver, BC, Canada, April 30 - May 3, 2018, Conference Track Proceedings},
  publisher    = {OpenReview.net},
  year         = {2018},
  url          = {https://openreview.net/forum?id=r1Ddp1-Rb},
  bibsource    = {dblp computer science bibliography, https://dblp.org}
}

@article{DBLP:journals/corr/abs-2511-18281,
  author       = {Yara Bahram and
                  Melodie Desbos and
                  Mohammadhadi Shateri and
                  Eric Granger},
  title        = {Uni-DAD: Unified Distillation and Adaptation of Diffusion Models for
                  Few-step Few-shot Image Generation},
  journal      = {CoRR},
  volume       = {abs/2511.18281},
  year         = {2025},
  url          = {https://doi.org/10.48550/arXiv.2511.18281},
  doi          = {10.48550/ARXIV.2511.18281},
  eprinttype   = {arXiv},
  eprint       = {2511.18281},
  bibsource    = {dblp computer science bibliography, https://dblp.org}
}

@article{DBLP:journals/corr/abs-2207-12598,
  author       = {Jonathan Ho and
                  Tim Salimans},
  title        = {Classifier-Free Diffusion Guidance},
  journal      = {CoRR},
  volume       = {abs/2207.12598},
  year         = {2022},
  url          = {https://doi.org/10.48550/arXiv.2207.12598},
  doi          = {10.48550/ARXIV.2207.12598},
  eprinttype   = {arXiv},
  eprint       = {2207.12598},
  bibsource    = {dblp computer science bibliography, https://dblp.org}
}

@inproceedings{DBLP:conf/nips/DhariwalN21,
  author       = {Prafulla Dhariwal and
                  Alexander Quinn Nichol},
  editor       = {Marc'Aurelio Ranzato and
                  Alina Beygelzimer and
                  Yann N. Dauphin and
                  Percy Liang and
                  Jennifer Wortman Vaughan},
  title        = {Diffusion Models Beat GANs on Image Synthesis},
  booktitle    = {Advances in Neural Information Processing Systems 34: Annual Conference
                  on Neural Information Processing Systems 2021, NeurIPS 2021, December
                  6-14, 2021, virtual},
  pages        = {8780--8794},
  year         = {2021},
  url          = {https://proceedings.neurips.cc/paper/2021/hash/49ad23d1ec9fa4bd8d77d02681df5cfa-Abstract.html},
  bibsource    = {dblp computer science bibliography, https://dblp.org}
}

@article{DBLP:journals/corr/abs-2211-03264,
  author       = {Jingyuan Zhu and
                  Huimin Ma and
                  Jiansheng Chen and
                  Jian Yuan},
  title        = {Few-shot Image Generation with Diffusion Models},
  journal      = {CoRR},
  volume       = {abs/2211.03264},
  year         = {2022},
  url          = {https://doi.org/10.48550/arXiv.2211.03264},
  doi          = {10.48550/ARXIV.2211.03264},
  eprinttype   = {arXiv},
  eprint       = {2211.03264},
  bibsource    = {dblp computer science bibliography, https://dblp.org}
}

@inproceedings{DBLP:conf/iclr/HuZYHWM25,
  author       = {Teng Hu and
                  Jiangning Zhang and
                  Ran Yi and
                  Hongrui Huang and
                  Yabiao Wang and
                  Lizhuang Ma},
  title        = {SaRA: High-Efficient Diffusion Model Fine-tuning with Progressive
                  Sparse Low-Rank Adaptation},
  booktitle    = {The Thirteenth International Conference on Learning Representations,
                  {ICLR} 2025, Singapore, April 24-28, 2025},
  publisher    = {OpenReview.net},
  year         = {2025},
  url          = {https://openreview.net/forum?id=wGVOxplEbf},
  bibsource    = {dblp computer science bibliography, https://dblp.org}
}

@inproceedings{DBLP:conf/iclr/ZhongZ0L25,
  author       = {Jincheng Zhong and
                  Xiangcheng Zhang and
                  Jianmin Wang and
                  Mingsheng Long},
  title        = {Domain Guidance: {A} Simple Transfer Approach for a Pre-trained Diffusion
                  Model},
  booktitle    = {The Thirteenth International Conference on Learning Representations,
                  {ICLR} 2025, Singapore, April 24-28, 2025},
  publisher    = {OpenReview.net},
  year         = {2025},
  url          = {https://openreview.net/forum?id=PplM2kDrl3},
  bibsource    = {dblp computer science bibliography, https://dblp.org}
}

@inproceedings{DBLP:conf/iclr/TrabuccoDGS24,
  author       = {Brandon Trabucco and
                  Kyle Doherty and
                  Max Gurinas and
                  Ruslan Salakhutdinov},
  title        = {Effective Data Augmentation With Diffusion Models},
  booktitle    = {The Twelfth International Conference on Learning Representations,
                  {ICLR} 2024, Vienna, Austria, May 7-11, 2024},
  publisher    = {OpenReview.net},
  year         = {2024},
  url          = {https://openreview.net/forum?id=ZWzUA9zeAg},
  bibsource    = {dblp computer science bibliography, https://dblp.org}
}

@inproceedings{DBLP:conf/cvpr/Wang025a,
  author       = {Yanghao Wang and
                  Long Chen},
  title        = {Inversion Circle Interpolation: Diffusion-based Image Augmentation
                  for Data-scarce Classification},
  booktitle    = {{IEEE/CVF} Conference on Computer Vision and Pattern Recognition,
                  {CVPR} 2025, Nashville, TN, USA, June 11-15, 2025},
  pages        = {25560--25569},
  publisher    = {Computer Vision Foundation / {IEEE}},
  year         = {2025},
  url          = {https://openaccess.thecvf.com/content/CVPR2025/html/Wang\_Inversion\_Circle\_Interpolation\_Diffusion-based\_Image\_Augmentation\_for\_Data-scarce\_Classification\_CVPR\_2025\_paper.html},
  doi          = {10.1109/CVPR52734.2025.02380},
  bibsource    = {dblp computer science bibliography, https://dblp.org}
}

\clearpage

\end{document}